\documentclass{article}




\usepackage[nonatbib, final]{neurips_2021}
\usepackage{natbib}
\usepackage{multirow}


\usepackage[utf8]{inputenc} 
\usepackage[T1]{fontenc}    
\usepackage{url}            
\usepackage{booktabs}       
\usepackage{amsfonts}       
\usepackage{nicefrac}       
\usepackage{microtype}      
\usepackage{xfrac}

\usepackage{soul}
\usepackage{xfrac}
\usepackage[usenames, dvipsnames]{xcolor}
\usepackage{enumitem}
\definecolor{darkblue}{rgb}{0.0, 0.0, 0.55}
\definecolor{maroon}{rgb}{0.5, 0.0, 0.0}
\usepackage{url}

\usepackage{booktabs}
\usepackage{graphicx}
\usepackage{natbib}
\usepackage{amsmath}
\usepackage{booktabs}
\usepackage{algorithmic}
\usepackage{breakcites}
\usepackage{circledsteps}
\usepackage{bbm}
\usepackage{natbib}
\usepackage{bm}

\usepackage{wrapfig}

\usepackage{amsmath, amsfonts}  
\usepackage{bm}
\usepackage[colorlinks, citecolor=darkblue,linkcolor=maroon]{hyperref}
\newcommand{\bx}{{\bm x}}

\newcommand{\bu}{{\bm u}}

\newcommand{\br}{{\bm r}}

\newcommand{\bLambda}{{\bm \Lambda}}
\newcommand{\blambda}{{\bm \lambda}}

\newcommand{\bU}{{\bf U}}

\newcommand{\bW}{{\bf W}}

\newcommand{\bX}{{\bf X}}

\newcommand{\bTheta}{{\bf \Theta}}

\newcommand{\btheta}{{\bm \theta}}

\newcommand{\bbR}{\mathbb R}

\newcommand{\bbE}{\mathbb E}
\newcommand{\bbV}{\mathbb V}
\newcommand{\bbZ}{\mathbb Z}
\newcommand{\cX}{\mathcal X}
\newcommand{\cH}{\mathcal H}

\newcommand{\cF}{\mathcal F}

\newcommand{\cR}{\mathcal R}
\newcommand{\cB}{\mathcal B}

\newcommand{\cO}{\mathcal O}

\newcommand{\cE}{\mathcal E}

\newcommand{\cU}{\mathcal U}
\newcommand{\cY}{\mathcal Y}
\newcommand{\cS}{\mathcal S}
\newcommand{\fP}{\mathfrak P}

\newcommand{\fR}{\mathfrak R}

\newcommand{\cP}{\mathcal P}

\usepackage{mathtools}

\newcommand\xleftarrowbig[1][2ex]{%
   \mathrel{\rotatebox{0}{$\xleftarrow{\rule{#1}{0pt}}$}}
}\newcommand\xrightarrowbig[1][2ex]{%
   \mathrel{\rotatebox{180}{$\xleftarrow{\rule{#1}{0pt}}$}}
}
\usepackage[compact]{titlesec}         
\titlespacing{\section}{0pt}{0pt}{0pt} 
\AtBeginDocument{
  \setlength\abovedisplayskip{0pt}
  \setlength\belowdisplayskip{0pt}}

\DeclareMathOperator{\sign}{sign}
\usepackage{amsmath}
\usepackage{amssymb}
\usepackage{epsfig, psfrag}
\usepackage{amsthm}
\usepackage{latexsym}
\usepackage{bbm}
\usepackage{soul}

\usepackage[ruled,lined]{algorithm2e}
\SetKw{Init}{Initialize:}
\SetKw{Def}{Definition:}
\SetKw{KwInit}{Initialize:}

\SetCommentSty{mycommfont}

\usepackage{amsfonts} 
\usepackage{dsfont} 
\usepackage{euscript}

\allowdisplaybreaks
\usepackage{mathtools}

\usepackage{epstopdf}
\usepackage{subfig}
\usepackage{graphics,color}
\usepackage{graphicx}
 \usepackage{xcolor}
 
\usepackage{lipsum}

\newtheorem{theorem}{Theorem}
\newtheorem{lemma}{Lemma}
\newtheorem{proposition}{Proposition}

\newtheorem{assumption}{Assumption}

\theoremstyle{definition}


\def\argmin{\mathop{\rm arg\,min}}




\usepackage{titlesec}
\titlespacing\section{0pt}{2pt plus 1pt minus 1pt}{1pt plus 1pt minus 1pt}
\titlespacing\subsection{0pt}{2pt plus 1pt minus 1pt}{1pt plus 1pt minus 1pt}

\setlength{\intextsep}{1pt}
\setlength{\abovedisplayskip}{3.0pt plus 1.0pt minus 2.0pt}
\setlength{\belowdisplayskip}{3.0pt plus 1.0pt minus 2.0pt}

\title{Scalable Interpretability via Polynomials}

%

\author{Abhimanyu Dubey \\ Meta AI \\ {\tt dubeya@fb.com} \And Filip Radenovic \\ Meta AI \\ {\tt filipradenovic@fb.com} \And Dhruv Mahajan \\ Meta AI\\ {\tt dhruvm@fb.com} }

\begin{document}

\maketitle

\begin{abstract}
Generalized Additive Models (GAMs) have quickly become the leading choice for interpretable machine learning. However, unlike uninterpretable methods such as DNNs, they lack expressive power and easy scalability, and are hence not a feasible alternative for real-world tasks. We present a new class of GAMs that use tensor rank decompositions of polynomials to learn powerful, {\em inherently-interpretable} models. Our approach, titled Scalable Polynomial Additive Models (SPAM) is effortlessly scalable and models {\em all} higher-order feature interactions without a combinatorial parameter explosion. SPAM outperforms all current interpretable approaches, and matches DNN/XGBoost performance on a series of real-world benchmarks with up to hundreds of thousands of features. We demonstrate by human subject evaluations that SPAMs are demonstrably more interpretable in practice, and are hence an effortless replacement for DNNs for creating interpretable and high-performance systems suitable for large-scale machine learning.
Source code is available at \href{https://github.com/facebookresearch/nbm-spam}{\ttfamily github.com/facebookresearch/nbm-spam}. 
\end{abstract}
\section{Introduction}
Interpretable machine learning systems suffer from a tradeoff between approximation and interpretability: high-performing models used in practice have millions of parameters and are highly non-linear, making them difficult to interpret. {\em Post-hoc} explainability attempts to remove this tradeoff by explaining black-box predictions with interpretable instance-specific approximations (e.g., LIME~\citep{ribeiro2016should}, SHAP~\citep{lundberg2017unified}), however, they are notoriously unstable~\citep{ghorbani2019interpretation, lakkaraju2020fool}, expensive to compute~\citep{slack2021reliable}, and in many cases, inaccurate~\citep{lipton2018mythos}.  It is therefore desirable to learn models that are instead {\em inherently-interpretable} (glass-box), i.e., do not require {\em post-hoc} interpretability, but provide better performance than {\em inherently-interpretable} classical approaches such as linear or logistic regression.

This has led to a flurry of research interest in Generalized Additive Models~\citep{hastie2017generalized}, i.e., methods that non-linearly transform each input feature separately, e.g., tree-based GAM and GA$^2$Ms~\citep{lou2013accurate}, or NAMs~\citep{agarwal2021neural}. While these are an improvement over linear methods, due to their highly complex training procedures, they are not a simple drop-in replacement for black-box methods.
Furthermore, it is not straightforward to model feature interactions in these models: the combinatorial explosion in higher-order interactions makes this infeasible without a computationally expensive feature selection~\citep{lou2013accurate, chang2021node}. 

Contrary to these approaches, we revisit an entirely different solution: {\em polynomials}. It is folklore in machine learning that since polynomials are universal approximators~\citep{stone1948generalized, weierstrass1885analytic}, hypothetically, modeling all possible feature interactions will suffice in creating powerful learners and eliminate the need for intricate non-linear transformations of data. Furthermore, polynomials can also provide a rubric for interpretability: lower-degree polynomials, that have interactions between a few features are interpretable but likely less accurate, whereas polynomials of higher degree have larger predictive power at the cost of interpretability. Thus, not only are polynomial models capable of capturing all possible interactions between features, they also give practitioners the ability to select a suitable model (order of interactions, i.e., degree) based precisely on their requirement in the interpretability-performance tradeoff. However, such an approach has till date remained elusive, as learning interpretable feature interactions {\em efficiently and at scale} remains an open problem.



In this paper, we introduce a highly scalable approach to modeling feature interactions for inherently interpretable classifiers based on {\em rank decomposed polynomials}. Our contributions are: 
\begin{enumerate}[leftmargin=*,itemsep=1pt,topsep=0pt,parsep=0pt,partopsep=0pt]
    \item First, we present an algorithm titled Scalable Polynomial Additive Models ({\bf SPAM}) to learn \underline{\em inherently-interpretable} classifiers that can learn all possible feature interactions by leveraging {\em low-rank tensor decompositions} of polynomials~\citep{nie2017low}. To the best of our knowledge, {\bf SPAM} is the among the first Generalized Additive Models with feature interactions scalable to 100K features. SPAM can be trained end-to-end with SGD (backpropagation) and GPU acceleration, but has orders of magnitude fewer parameters than comparable interpretable models.
    \item 
    We demonstrate that under a coarse regularity assumption (Assumption~\ref{assumption:singular_value_decay}), SPAM converges to the optimal polynomial as the number of samples $n \rightarrow \infty$. Furthermore, we establish novel non-asymptotic excess risk bounds that match classic bounds for linear or full-rank polynomials.
    \item SPAM outperforms all current interpretable baselines on several machine learning tasks. 
    To the best of our knowledge, our experimental benchmark considers problems of size (number of samples, and dimensionality) orders of magnitude larger than prior work, and we show that our simple approach scales easily to all problems, whereas prior approaches fail. Moreover, 
    we demonstrate that the need for modeling feature interactions is as important as non-linear feature transformations: \underline{on most real-world tabular datasets, {\em pairwise} interactions suffice to match DNN performance}.
    \item 
    We conduct a detailed human subject evaluation to corroborate our performance with practical interpretability. We demonstrate that SPAM explanations are  more faithful compared to {\em post-hoc} methods, making them a {\em drop-in replacement for black-box methods} such as DNNs for interpretable machine learning on tabular and concept-bottleneck~\citep{koh2020concept} problems.
\end{enumerate}

\section{Related Work}
Here we survey the literature most relevant to our work, i.e., interpretable ML with feature interactions, and polynomials for machine learning. Please see Appendix Section~\ref{sec:appendix_related_work} for a detailed survey.

{\bf Transparent and Interpretable Machine Learning}. Early work has focused on greedy or ensemble approaches to modeling interactions~\citep{friedman2001greedy, friedman2008predictive} that enumerate pairwise interactions and learn additive interaction effects. Such approaches often pick spurious interactions when data is sparse~\citep{lou2013accurate} and are impossible to scale to modern-sized datasets due to enumeration of individual combinations. 
As an improvement,~\citet{lou2013accurate} proposed 
{\sc GA$^2$M} that uses a statistical test to select only ``true'' interactions. 
However {\sc GA$^2$M} fails to scale to large datasets as it requires constant re-training of the model and ad-hoc operations such as discretization of features which may require tuning for datasets with a large dimensionality. Other generalized additive models require expensive training of millions of decision trees, kernel machines or splines~\citep{hastie2017generalized}, making them unattractive compared to black-box models. 

An alternate approach is 
is to learn interpretable neural network transformations. Neural Additive Models (NAMs,~\citet{agarwal2021neural}) learn a DNN per feature.
TabNet~\citep{arik2021tabnet} and NIT~\citep{tsang2018neural} alternatively modify NN architectures to increase their interpretability. NODE-GAM~\citep{chang2021node} improves NAMs with oblivious decision trees for better performance while maintaining interpretability. 
Our approach is notably distinct from these prior works: we do not require iterative re-training; we can learn {\em all} pairwise interactions regardless of dimensionality; we can train SPAM via backpropagation; and we scale effortlessly to very large-scale datasets. 


{\bf Learning Polynomials}. The idea of decomposing polynomials was of interest prior to the deep learning era. Specifically, the work of~\citet{ivakhnenko1971polynomial, oh2003polynomial, 155142} study learning neural networks with polynomial interactions, also known as {\em ridge polynomial networks} (RPNs). However, RPNs are typically not interpretable: they learn interactions of a very high order, and include non-linear transformation. 
Similar rank decompositions have been studied in the context of matrix completion~\citep{recht2011simpler}, and are also a subject of interest in tensor decompositions~\citep{nie2017generating, brachat2010symmetric}, where, contrary to our work, the objective is to decompose existing tensors rather than directly learn decompositions from gradient descent. Recently, ~\citet{chrysos2019polygan, chrysos2020p} use tensor decompositions to learn higher-order polynomial relationships in intermediate layers of generative models.
However, their work uses a recursive formulation and learns high-degree polynomials directly from uninterpretable input data (e.g., images), and hence is non-interpretable.
\section{Scalable Polynomial Additive Models}
\label{sec:method}
{\bf Setup and Notation}. Matrices are represented by uppercase boldface letters, e.g., $\bX$ and vectors by boldface lowercase, i.e., $\bx$. We assume that the covariates lie within the set $\cX \subseteq \bbR^d$, and the labels lie within the finite set $\cY$. Data $(\bx, y) \in \cX\times\cY$ are drawn following some unknown (but fixed) distribution $\fP$. We assume we are provided with $n$ i.i.d. samples $\{(\bx_i, y_i)\}_{i=1}^n$ as the training set.

{\bf Motivation}. Generalized Additive Models~\citep{hastie2017generalized} are an excellent design choice for interpretable models, as they learn transformations of individual features, allowing us to model exactly the contribution of any feature. A typical GAM is as follows.
    \parbox{0.95\textwidth}{%
    \small
        {\begin{gather*}
\underset{\text{Interpretability}}{\xleftarrowbig[6cm]}\\
    f(\bx) = \sum_{i=1}^d \underbrace{f_i(x_i)}_{\text{order 1 (unary)}} + \sum_{j>i}^{d} \underbrace{f_{ij}(x_i, x_j)}_{\text{order 2 (pairwise)}} +\ \dots +\ \underbrace{f_{1...d}(\bx)}_{\text{order $d$}}.\\
\underset{\text{Performance}}{\xrightarrowbig[6cm]}
\end{gather*}
    }%
}
Where $f_i,$ etc., are possibly non-linear transformations. It is evident~\citep{lou2013accurate} that as the order of interaction increases, e.g., beyond pairwise interactions, these models are no longer interpretable, albeit at some benefit to performance. While some approaches (e.g.,~\citet{agarwal2021neural, chang2021node}) learn $f_i$ via neural networks, we want to learn the simplest GAMs, i.e., polynomials. Specifically, we want to learn a polynomial $P(\bx)$, of order (degree) $k \leq d$ of the form:
\begin{align*}
    P(\bx) = b + \sum_{i=1}^d w^{(1)}_i\cdot x_i + \sum_{i,j}^d w^{(2)}_{ij} \cdot x_ix_j + ... \sum_{i_1 ...,i_k}^d \left(w^{(k)}_{i_1...i_k} \cdot\prod_{j=1}^k x_{i_j}\right).
\end{align*}
Here, the weights $\bW^{(l)} = \{w^{(l)}_{i_1...i_l}\}, 1 \leq l \leq k$ capture the $l-$ary interactions between subsets of $l$ features. For small values of $d$ and $k$, one can potentially enumerate all possible interactions and learn a linear model of $\cO(d^k)$ dimensionality, however this approach quickly becomes infeasible for large $d$ and $k$. Furthermore, the large number of parameters in $P(\bx)$ make regularization essential for performance, and the computation of each interaction can be expensive. Alternatively, observe that any polynomial that models $k-$ary interactions can be written as follows, for weights $\{\bW^{(l)}\}_{l=1}^k$,
\begin{align*}
    P(\bx) = \bW^{(1)} \odot_1 \bx + \bW^{(2)} \odot_2 \bx + ... \bW^{(k)} \odot_k \bx + b. 
\end{align*}
Here, the weights $\bW^{(l)}\in \bbR^{d^l}$ are written as a tensor of order $l$, and the operation $\odot_l$ refers to a repeated inner product along $l$ dimensions, e.g., $\bW \in \bbR^{d^2}$, $\bx^\top\bW\bx = \bW \odot_2 \bx = \left(\bW \odot \bx\right) \odot \bx$ where $\odot$ denotes the inner product. More generally, any {\em symmetric} order $l$ tensor admits an equivalent polynomial representation having {\em only} degree$-l$ terms. Now, this representation is still plagued with the earlier curse of dimensionality for arbitrary weight tensors $\{\bW^{(l)}\}_{l=1}^k$, and we will now demonstrate how to circumvent this issue by exploiting rank decompositions.

\subsection{Learning Low-Rank Decompositions of Polynomials}
The primary insight of our approach is to observe that any {\em symmetric} tensor $\bW^{(l)}$ of order $l$ also admits an additional {\em rank decomposition}~\citep{nie2017generating, brachat2010symmetric} ($\otimes$ being the outer product):
\begin{align*}
    \bW^{(l)} = \sum_{i=1}^{r}\lambda_i \cdot\underbrace{\bu_i \otimes \bu_i  \dots \otimes \bu_i}_{l \text{ times}}.
\end{align*}
Where $\{\bu_i\}_{i=1}^r$ are the (possibly orthonormal) $d$ dimensional basis vectors, $r \in \bbZ_+$ denotes the {\em rank} of the tensor, and the scalars $\{\lambda_i\}_{i=1}^r, \lambda_i \in \bbR$ denote the singular values. Our objective is to \underline{directly learn the rank decomposition of an order $l$ tensor}, and therefore learn the required polynomial function. This gives us a total of $\cO(rd)$ learnable parameters per tensor, down from a previous dependence of $\cO(d^l)$. This {\em low-rank} formulation additionally enables us to compute the polynomial more efficiently. Specifically, for a degree $k$ polynomial with ranks $\br = \{1, r_2, ..., r_k\}$, we have:
\begin{align}
\label{eq:low_rank_rep}
    P(\bx) = b + \langle \bu_1, \bx\rangle + \sum_{i=1}^{r_2} \lambda_{2i}\cdot \langle \bu_{2i}, \bx\rangle^2 + \sum_{i=1}^{r_3} \lambda_{3i}\cdot \langle \bu_{3i}, \bx\rangle^3 ... + \sum_{i=1}^{r_k} \lambda_{ki}\cdot \langle \bu_{ki}, \bx\rangle^k.
\end{align}
Where $\{\bu_{li}\}_{i=1}^{r_l}, \{\lambda_{li}\}_{i=1}^{r_l}$ denote the corresponding bases and eigenvalues for $\bW^{(l)}$. The above term can now be easily computed via a simple inner product, and has a time as well as space complexity of $\cO(d\lVert \br \rVert_1)$, instead of the earlier $\cO(d^k)$. Here, $\lVert \br \rVert_1$ simply represents the sum of ranks of all the $k$ tensors. Note that {\em low-rank} decompositions do not enjoy a free lunch: the rank $r$ of each weight tensor can, in the worst case, be polynomial in $d$. However, it is reasonable to assume that correlations in the data $\cX$ will ensure that a small $r$ will suffice for real-world problems (see, e.g., Assumption~\ref{assumption:singular_value_decay}). 

{\bf Optimization}. For notational simplicity, let us parameterize any order $k$ polynomial by the weight set $\btheta = \{b, \bu_1, \{\lambda_{2i}, \bu_{2i}\}_{i=1}^{r_2}, ..., \{\lambda_{ki}, \bu_{ki}\}_{i=1}^{r_k}\} \in \bbR^{r(d+1)}$ where from now onwards $r = \lVert \br \rVert_1 = 1 + r_2 + r_3 + ... + r_k$ denotes the cumulative rank across all the tensors. We write the polynomial parametrized by $\btheta$ as $P(\cdot; \btheta)$, and the learning problem for any loss $\ell$ is:
\begin{align}
    \label{eqn:optimization_fn}
    \text{select }\btheta_\star = \argmin_{\btheta \in \bTheta} \sum_{i=1}^n \ell(P(\bx_i; \btheta), y_i) + \beta\cdot\cR(\btheta).
\end{align}
Here, $\bTheta$ denotes the feasible set for $\btheta$, and $\cR(\cdot)$ denotes an appropriate regularization term scaled by $\beta > 0$. We can show that the above problem is well-behaved for certain data distributions $\cX$. 

\begin{proposition}
If the regularization $\cR$ and loss function $\ell: \cY\times\cY\rightarrow [0,1]$ are convex, $\lambda > 0$ and $\cX \subset \bbR_{+}^d$ then the optimization problem in Equation~\ref{eqn:optimization_fn} is convex in $\btheta$ for $\bTheta \subset \bbR_{+}^{r(d+1)}$.
\label{prop:convexity}
\end{proposition}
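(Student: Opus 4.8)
The plan is to reduce the claim to the convexity of the prediction map $\btheta \mapsto P(\bx;\btheta)$ and then lift this through the loss and the regularizer. Since a finite sum of convex functions is convex and the feasible set $\bTheta \subset \bbR_{+}^{r(d+1)}$ is convex, it suffices to establish that (i) each summand $\btheta \mapsto \ell(P(\bx_i;\btheta),y_i)$ is convex in $\btheta$ on $\bTheta$, while (ii) $\beta\cdot\cR(\btheta)$ is convex (immediate from the hypotheses on $\cR$ and $\beta>0$). The full objective is then a sum of convex functions minimized over a convex set, i.e. a convex program.

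For (i), I would first show that $P(\bx;\btheta)$ is convex in $\btheta$ for each fixed $\bx\in\cX$. The offset $b$ and the linear term $\langle\bu_1,\bx\rangle$ are affine in $\btheta$ and pose no difficulty. The substance is in the degree-$l$ terms $\lambda_{li}\langle\bu_{li},\bx\rangle^l$. Here the non-negativity hypotheses do the real work: because $\cX\subset\bbR_{+}^d$ and $\bTheta\subset\bbR_{+}^{r(d+1)}$, every inner product $\langle\bu_{li},\bx\rangle=\sum_j u_{li,j}x_j$ is non-negative, so each factor lands in the region $[0,\infty)$ on which $t\mapsto t^l$ is convex and non-decreasing (its derivatives $l\,t^{l-1}$ and $l(l-1)t^{l-2}$ are non-negative there). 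Since $\bu_{li}\mapsto\langle\bu_{li},\bx\rangle$ is affine, the composition rule (convex non-decreasing composed with affine is convex) gives that $\bu_{li}\mapsto\langle\bu_{li},\bx\rangle^l$ is convex.

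The main obstacle, and the step requiring genuine care, is the coupling between the scale $\lambda_{li}$ and the basis $\bu_{li}$ inside a single term: a product of the variable $\lambda_{li}$ with the convex function $\langle\bu_{li},\bx\rangle^l$ of a different variable is not jointly convex in general, so convexity in each block separately is not enough. This is precisely where the hypothesis $\lambda>0$ must be exploited. I would handle it by passing to the effective degree-$l$ coordinate $\bw_{li}=\lambda_{li}^{1/l}\,\bu_{li}\in\bbR_{+}^d$, which is well-defined and scale-preserving exactly because $\lambda_{li}>0$ and $\bu_{li}\ge 0$; this rewrites the term as $\langle\bw_{li},\bx\rangle^l$, removes the bilinear coupling, and makes $P(\bx;\cdot)$ a sum of affine and convex pieces. (Equivalently, one treats the positive eigenvalues $\lambda_{li}$ as fixed scales and argues convexity in the remaining coordinates.) Thus the positivity of the eigenvalues is not cosmetic: it is what rescues joint convexity, and I expect the delicate point to be checking that the chosen parameterization remains consistent with the convexity of $\cR$.

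Finally, to pass from convexity of $P(\bx;\cdot)$ to convexity of $\ell(P(\bx;\cdot),y)$ I would again invoke the composition rule: with $P(\bx;\cdot)$ convex and $\ell(\cdot,y)$ convex, the composition is convex provided $\ell(\cdot,y)$ is non-decreasing, which holds for the standard classification surrogates targeted here ($\ell:\cY\times\cY\to[0,1]$). Summing over the $n$ samples and adding $\beta\cdot\cR(\btheta)$ preserves convexity, and minimizing the resulting convex function over the convex set $\bTheta$ yields the claimed convex optimization problem.
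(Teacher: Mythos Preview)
Your argument has a genuine gap at the step you yourself flag as delicate: the joint convexity of $(\lambda_{li},\bu_{li})\mapsto \lambda_{li}\langle\bu_{li},\bx\rangle^l$. The reparameterization $\bw_{li}=\lambda_{li}^{1/l}\bu_{li}$ is a \emph{nonlinear} change of variables, so convexity of $\bw\mapsto\langle\bw,\bx\rangle^l$ does not transfer back to convexity in $(\lambda_{li},\bu_{li})$. Your parenthetical ``equivalently, treat $\lambda_{li}$ as fixed'' is not equivalent either: that yields only blockwise convexity in $\bu$, which is strictly weaker than joint convexity. In fact, take $d=1$, $l=2$, $x=1$, so the term is $f(\lambda,u)=\lambda u^2$ on $\bbR_{+}^2$. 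Its Hessian is
\[
\nabla^2 f(\lambda,u)=\begin{pmatrix}0 & 2u\\ 2u & 2\lambda\end{pmatrix},
\]
with determinant $-4u^2<0$ whenever $u>0$; hence $f$ is not convex on the interior of the positive orthant. This shows that $P(\bx;\cdot)$ is \emph{not} convex in the original parameterization $\btheta=(b,\bu_1,\{\lambda_{li},\bu_{li}\})$, so your step (i) cannot be established in the way you propose.

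There is a second, independent gap in the composition step: convexity of $\ell(\cdot,y)$ together with convexity of $P(\bx;\cdot)$ does not imply convexity of $\btheta\mapsto\ell(P(\bx;\btheta),y)$ unless $\ell(\cdot,y)$ is also non-decreasing. You assert this ``holds for the standard classification surrogates,'' but the proposition only assumes $\ell$ is convex, and common choices (e.g.\ squared loss for regression) are not monotone. For comparison, the paper's own proof takes a different route and simply asserts that the Hessian of the full objective is positive definite under the stated constraints, without going through the composition rules; so neither the reparameterization nor the monotonicity device appears there.
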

Proposition~\ref{prop:convexity} suggests that if the training data is positive (achieved by renormalization), typical loss functions (e.g., cross-entropy or mean squared error) and $\cR$ such as $L_p$ norms are well-suited for optimization. To ensure convergence, one can use proximal SGD with a small learning rate~\citep{nitanda2014stochastic}. In practice, we find that unconstrained SGD also provides comparable performance.

\subsection{Improving Polynomials for Learning}
{\bf Geometric rescaling}. Learning higher-order interactions is tricky as the order of interactions increases: the product of two features is an order of magnitude different from the original, and consequently, higher-order products are progressively disproportionate. To mitigate this, we rescale features such that (a) the scale is preserved across terms, and (b) the variance in interactions is captured. We replace the input $\bx$ with $\tilde\bx_l = \sign(\bx)\cdot \left|\bx\right|^{1/l}$ for an interaction of order $l$. Specifically, 
\begin{align}
\label{eqn:rescaling}
  P(\bx) = \langle \bu_1, \bx\rangle + \sum_{i=1}^{r_2} \lambda_{2i}\cdot \langle \bu_{2i}, \tilde\bx_2\rangle^2 + \sum_{i=1}^{r_3} \lambda_{3i}\cdot \langle \bu_{3i}, \tilde\bx_3\rangle^3 ... + \sum_{i=1}^{r_k} \lambda_{ki}\cdot \langle \bu_{ki}, \tilde\bx_k\rangle^k + b.
\end{align}
We denote this model as {\sc SPAM-Linear}. We argue that this rescaling for unit-bounded features ensures higher interpretability as well as better learning (since all features are of similar variance). Note that for order 1, $\tilde\bx_1 = \bx$. For $k\geq 2$, consider a pairwise interaction between two features $x_i = 0.5$ and $x_j = 0.6$. Naively multiplying the two will give a feature value of $x_ix_j = 0.3$ (smaller than both $x_i$ and $x_j$), whereas an intuitive value of the feature would be $\sqrt{x_ix_j} = 0.54$, i.e., the geometric mean. Regarding the reduced variance, observe that unconstrained, the variance $\bbV(x_ix_j) \leq \bbV(x_i)\cdot\bbV(x_j)$ and $\bbV(\sqrt{x_ix_j}) \leq \bbV(\sqrt{x_i})\bbV(\sqrt{x_j}) \leq \sqrt{\bbV(x_i)\cdot\bbV(x_j)}$. If the features have small variance, e.g., $\bbV(x_i) = \bbV(x_j) = 10^{-2}$ and are uncorrelated, then the first case would provide a much smaller variance for the interaction, whereas the rescaling would preserve variance. 

{\bf Shared bases for multi-class problems}.
When we are learning a multi-class classifier, it is observed that learning a unique polynomial for each class creates a large model (with $\cO(2drC)$ weights) when the number of classes $C$ is large, which leads to overfitting and issues with regularization. Instead, we propose {\em {\color{Maroon}sharing}} bases ($\bu$) across classes for all higher order terms, and learning {\em{\color{OliveGreen} class-specific}} singular values ($\lambda$) for all higher order terms ($\geq 2$) per class. For any input $\bx$, $P(\bx; \btheta) = \textsc{SoftMax}\{P_c(\bx; \btheta)\}_{c=1}^C$, where, for any class $c \in \{1, ..., C\}$,
\begin{align*}
    P_c(\bx; \btheta) = {\color{OliveGreen}b_c} + \langle {\color{OliveGreen}\bu_1^c}, \bx\rangle + \sum_{i=1}^{r_2} {\color{OliveGreen}\lambda_{2i}^c}\cdot \langle {\color{Maroon}\bu_{2i}}, \tilde\bx_2\rangle^2 + \sum_{i=1}^{r_3} {\color{OliveGreen}\lambda_{3i}^c}\cdot \langle {\color{Maroon}\bu_{3i}}, \tilde\bx_3\rangle^3 ... + \sum_{i=1}^{r_k} {\color{OliveGreen}\lambda_{ki}^c}\cdot \langle {\color{Maroon}\bu_{ki}}, \tilde\bx_k\rangle^k.
\end{align*}
The terms in {\em {\color{OliveGreen} green}} denote weights unique to each class, and terms in {\em {\color{Maroon} red}} denote weights shared across classes. This reduces the model size to $\cO((d+r)C + rd)$ from $\cO(rdC)$. We set $\ell$ as the cross-entropy of the softmax of $\{P_c(\cdot, \btheta)\}_{c=1}^C$, as both these operations preserve convexity.

{\bf Exploring nonlinear input transformations}.
Motivated by non-linear GAMs (e.g.,  GA$^2$M~\citep{lou2013accurate} and NAM~\citep{agarwal2021neural}), we modify our low-rank decomposed algorithm by replacing $\tilde\bx_l$ for each $l$ with feature-wise non-linearities. We learn a non-linear {\bf SPAM} $P(\cdot;\btheta)$ as,
\begin{align*}
    P(\bx; \btheta) =  b + \langle \bu_1, F_1(\bx)\rangle + \sum_{i=1}^{r_2} \lambda_{2i}\cdot \langle \bu_{2i}, F_2(\bx)\rangle^2 + ... + \sum_{i=1}^{r_k} \lambda_{ki}\cdot \langle \bu_{ki}, F_k(\bx)\rangle^k.
\end{align*}
Here, the function $F_i(\bx) = \left[f_{i1}(x_1), f_{i2}(x_2), ..., f_{id}(x_d)\right]$ is a Neural Additive Model (NAM)~\citep{agarwal2021neural}. We denote this model as {\sc SPAM-Neural}. For more details on the NAM we use, please refer to the Appendix Section~\ref{sec:appendix_experiment_details}. Note that the resulting model is still interpretable, as the interaction between features $x_i$ and $x_j$, e.g., will be given by $\left(\sum_{k=1}^{r_2} \lambda_{2k}u_{2ki}u_{2kj}\right)\cdot f_{2i}(x_i) f_{2j}(x_j)$ instead of the previous  $\left(\sum_{k=1}^{r_2} \lambda_{2k}u_{2ki}u_{2kj}\right)\cdot \sqrt{x_i\cdot x_j}$ for typical {\sc SPAM}.

{\bf Dropout for bases via $\lambda$}. To capture non-overlapping feature correlations, we introduce a {\em basis dropout} by setting the contribution from any particular basis to zero at random via the singular values $\lambda$. Specifically, observe that the contribution of any basis-singular value pair $(\bu, \lambda)$ at order $l$ is $\lambda\langle \bu, \bx \rangle^l$. We apply dropout to $\lambda$ to ensure that the network learns robust basis directions.
\subsection{Approximation and Learning-Theoretic Guarantees}
We present some learning-theoretic guarantees for SPAM. Proofs are deferred to Appendix Section~\ref{sec:appendix_full_proofs}.
\begin{proposition}
\label{prop:approximation}
Let $f$ be a continuous real-valued function over a compact set $\cX$. For any threshold $\epsilon > 0$, there exists a low-rank polynomial $P : \cX \rightarrow \bbR$ such that $\sup_{\bx\in\cX}\left| f(\bx) - P(\bx) \right| < \epsilon$.
\end{proposition}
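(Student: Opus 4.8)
The statement is a universal-approximation claim, so the plan is to decouple it into two classical ingredients: a density result for ordinary polynomials, followed by a representability result showing that the low-rank (Waring) form of Equation~\ref{eq:low_rank_rep} can express \emph{any} polynomial. First I would invoke the Stone--Weierstrass theorem (or the classical Weierstrass approximation theorem, cited in the introduction as \citet{stone1948generalized, weierstrass1885analytic}): since $f$ is continuous on the compact set $\cX$, for the given $\epsilon > 0$ there exists an ordinary polynomial $Q : \cX \to \bbR$ of some finite degree $k$ with $\sup_{\bx \in \cX} \left| f(\bx) - Q(\bx) \right| < \epsilon$. This reduces the problem to exhibiting $Q$ itself as a low-rank polynomial in the SPAM sense.

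The second step is to decompose $Q$ into its homogeneous components, $Q(\bx) = b + \sum_{l=1}^{k} Q_l(\bx)$, where each $Q_l$ is a homogeneous polynomial of degree $l$ and $b$ is the constant term. By the discussion preceding the proposition, each $Q_l$ is equivalently a \emph{symmetric} order-$l$ tensor $\bW^{(l)}$ acting as $Q_l(\bx) = \bW^{(l)} \odot_l \bx$. The crux is then to rewrite each $\bW^{(l)}$ in rank-decomposed form $\bW^{(l)} = \sum_{i=1}^{r_l} \lambda_{li}\, \bu_{li}^{\otimes l}$, since plugging this back in yields exactly the target representation $Q_l(\bx) = \sum_{i=1}^{r_l} \lambda_{li} \langle \bu_{li}, \bx\rangle^l$ appearing in Equation~\ref{eq:low_rank_rep}. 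Assembling these across all orders $l = 1, \dots, k$ produces a single low-rank polynomial $P = Q$ with finite cumulative rank $\lVert \br \rVert_1$, which then inherits the $\epsilon$-approximation bound from the first step.

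The one step that actually carries mathematical content — and the main obstacle — is the representability claim: that every symmetric real tensor admits a \emph{finite} symmetric (Waring) decomposition $\sum_i \lambda_i \bu_i^{\otimes l}$ with $\lambda_i \in \bbR$ and $\bu_i \in \bbR^d$. I would establish this via the classical fact that the $l$-th powers of linear forms $\{\langle \bu, \bx\rangle^l : \bu \in \bbR^d\}$ span the entire space of degree-$l$ homogeneous polynomials; equivalently, any monomial $\prod_j x_{i_j}$ can be recovered as a finite linear combination of such powers through the polarization identity, and the symmetric rank of any symmetric tensor is therefore finite. This is precisely the content guaranteed by the symmetric tensor decomposition results the paper already cites (\citet{nie2017generating, brachat2010symmetric}), so I would appeal to them directly rather than reprove finiteness of symmetric rank from scratch.

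I would close by noting two caveats that keep the argument honest but do not threaten it: the resulting ranks $r_l$ may be as large as $\cO(d^l)$ in the worst case — consistent with the remark in the text that low-rank decompositions are no free lunch — so the proposition only asserts \emph{existence} of a low-rank approximant, not a small one; and the decomposition should be taken over $\bbR$ (real $\lambda_i$, real $\bu_i$), for which the cited existence results apply, so no passage through complex coefficients is needed.
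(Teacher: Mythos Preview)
Your proposal is correct and matches the paper's approach: the paper's proof is a one-line appeal to Stone--Weierstrass together with Theorem~1 of \citet{chui1993realization}, the latter supplying exactly the representability step you spell out (that any polynomial can be written as a finite sum of powers of linear forms). You have simply unpacked that second citation into the explicit Waring/polarization argument and routed it through the tensor-decomposition references \citep{nie2017generating, brachat2010symmetric} already used in the paper, which is equivalent.
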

The above result demonstrates that asymptotically, decomposed polynomials are universal function approximators. Next, we present an assumption on the data-generating distribution as well as a novel excess risk bound that characterizes learning with polynomials more precisely (non-asymptotic).
\begin{assumption}[Exponential Spectral Decay of Polynomial Approximators]
\label{assumption:singular_value_decay}
Let $\cP_k$ denote the family of all polynomials of degree at most $k$, and let $P_{\star, k}$ denote the optimal polynomial in $\cP_k$, i.e.,  $P_{\star, k} = \arg\min_{P\in\cP_k} \bbE_{(\bx, y)\sim\fP}[\ell(P(\bx), y)]$. We assume that $\forall k$, $P_{\star, k}$ admits a decomposition as described in Equation~\ref{eq:low_rank_rep} such that, for all $1 \leq l \leq k$, there exist absolute constants $C_1 < 1$ and $C_2 = \cO(1)$ such that $|\lambda_{lj}| \leq C_1\cdot \exp\left(-C_2\cdot j^\gamma\right)$ for each $j \geq 1$ and $l \in [1, k]$.
\end{assumption}
This condition is essentially bounding the ``smoothness'' of the nearest polynomial approximator of $f$, i.e., implying that only a few degrees of freedom suffice to approximate $f$ accurately. Assumption~\ref{assumption:singular_value_decay} provides a soft threshold for singular value decay. One can also replace the exponential with a slower ``polynomial'' decay with similar results, and we discuss this in the Appendix Section~\ref{sec:appendix_full_proofs}. We can now present our error bound for $L_2$ regularized models (see Appendix for $L_1$-regularized models). 
\begin{theorem}
\label{thm:primary_generalization_bound}
Let $\ell$ be 1-Lipschitz, $\delta \in (0, 1]$, the generalization error for the optimal degree-$k$ polynomial $P_{\star, k}$ be $\cE(P_{\star_, k})$ and the generalization error for the empirical risk minimizer SPAM $\widehat P_{\br, k}$, i.e., $\widehat P_{\br, k} = \argmin_{P} \sum_{i=1}^n\ell(P(\bx_i), y_i)$ with ranks $\br = [1, r_2, ..., r_k]$ be  $\cE(\widehat P_{\br, k})$. Then, if $\fP$ satisfies Assumption~\ref{assumption:singular_value_decay} with constants $C_1, C_2$ and $\gamma$, we have for $L_2-$regularized ERM, where $\lVert \bu_{ij} \rVert_2 \leq B_{u, 2}\ \forall i\in [k], j \in [r_i]$, and $\lVert \blambda \rVert_2 \leq B_{\lambda, 2}$ where $\blambda = \{\{\lambda_{ij}\}_{j=1}^{r_i}\}_{i=1}^k$, with probability at least $1-\delta$,
\begin{align*}
    \cE(\widehat P_{\br, k})- \cE(P_{\star, k})  \leq 2B_{\lambda, 2} \left(\sum_{l=1}^k(B_{u, 2})^l\sqrt{r_l}\right)\sqrt{\frac{d}{n}}  + \frac{C_1}{C_2}\cdot\left(\sum_{i=2}^k\exp(-r_i^\gamma)\right) + 5\sqrt{\frac{\log\left(4/\delta\right)}{n}}.
\end{align*}
\end{theorem}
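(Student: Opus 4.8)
The plan is to carry out a standard excess-risk decomposition into an \emph{estimation} (statistical) error and an \emph{approximation} (bias) error, and to control the two separately. Let $\cF_{\br,k}$ denote the constrained SPAM class, i.e.\ all polynomials of the form in Equation~\ref{eq:low_rank_rep} with ranks $\br$ and with $\lVert\bu_{ij}\rVert_2\le B_{u,2}$ and $\lVert\blambda\rVert_2\le B_{\lambda,2}$. Using Assumption~\ref{assumption:singular_value_decay}, $P_{\star,k}$ itself admits a rank decomposition; I would define the intermediate competitor $\widetilde P$ by \emph{truncating} that decomposition to the first $r_l$ singular triples at each degree $l\ge 2$ (keeping the degree-$1$ term intact, which is why the approximation sum starts at $i=2$). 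Since $\widetilde P$ is a sub-sum of $P_{\star,k}$'s decomposition, its parameter norms are dominated by those of $P_{\star,k}$, so (for $B_{u,2},B_{\lambda,2}$ large enough to contain $P_{\star,k}$) $\widetilde P\in\cF_{\br,k}$ and is a legitimate comparator. The decomposition is then
\begin{align*}
\cE(\widehat P_{\br,k})-\cE(P_{\star,k}) = \underbrace{\big[\cE(\widehat P_{\br,k})-\cE(\widetilde P)\big]}_{\text{estimation}} + \underbrace{\big[\cE(\widetilde P)-\cE(P_{\star,k})\big]}_{\text{approximation}}.
\end{align*}

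For the estimation term I would use the fact that $\widehat P_{\br,k}$ minimizes the \emph{empirical} risk over $\cF_{\br,k}$ while $\widetilde P\in\cF_{\br,k}$, so $\cE(\widehat P_{\br,k})-\cE(\widetilde P)\le 2\sup_{P\in\cF_{\br,k}}|\cE(P)-\widehat\cE(P)|$, where $\widehat\cE$ is the empirical risk. Because $\ell$ takes values in $[0,1]$, a bounded-differences (McDiarmid) argument controls this supremum by its expectation plus a $\cO(\sqrt{\log(1/\delta)/n})$ fluctuation, and symmetrization bounds the expectation by $2\fR_n(\ell\circ\cF_{\br,k})$. Since $\ell$ is $1$-Lipschitz, the Ledoux--Talagrand contraction inequality removes the loss, giving $\fR_n(\ell\circ\cF_{\br,k})\le\fR_n(\cF_{\br,k})$. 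Careful bookkeeping of the two one-sided deviations (one for the ERM class, one for the fixed $\widetilde P$) together with the symmetrization constant yields the explicit confidence term $5\sqrt{\log(4/\delta)/n}$.

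The crux is bounding $\fR_n(\cF_{\br,k})$ and matching the exact scaling $2B_{\lambda,2}\big(\sum_l (B_{u,2})^l\sqrt{r_l}\big)\sqrt{d/n}$. I would split the class by degree, $\fR_n(\cF_{\br,k})\le\sum_{l=1}^k\fR_n(\cF_l)$ with $\cF_l=\{\bx\mapsto\sum_{i=1}^{r_l}\lambda_{li}\langle\bu_{li},\bx\rangle^l\}$, and for each $l$ proceed in two moves. First, peeling off $\blambda$ by Cauchy--Schwarz against its $\ell_2$-budget produces the factor $B_{\lambda,2}$, and because the $r_l$ bases may be optimized independently the supremum factorizes into $\sqrt{r_l}$ copies of the \emph{single-basis} complexity $\tfrac1n\bbE_\sigma\sup_{\lVert\bu\rVert_2\le B_{u,2}}|\sum_j\sigma_j\langle\bu,\bx_j\rangle^l|$. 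Second, I would strip the power map $t\mapsto t^l$ via scalar contraction and reduce to the linear class $\{\langle\bu,\cdot\rangle:\lVert\bu\rVert_2\le B_{u,2}\}$, whose empirical Rademacher complexity is $\le B_{u,2}\sqrt{d/n}$ using $\lVert\bx\rVert_2^2\le d$. I expect \emph{this} step to be the main obstacle: the Lipschitz constant of $t\mapsto t^l$ over the range of $\langle\bu,\bx\rangle$ is what must collapse to exactly the factor $(B_{u,2})^l$ (rather than leaving a residual $d^{(l-1)/2}$), which is where the unit scaling of the features and the geometric rescaling $\tilde\bx_l$, i.e.\ the normalization $|\langle\bu_{li},\bx\rangle|\le B_{u,2}$, must be used carefully. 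Assembling the two moves gives $\fR_n(\cF_l)\le B_{\lambda,2}(B_{u,2})^l\sqrt{r_l}\sqrt{d/n}$, and summing over $l$ produces the statistical term (the leading $2$ coming from symmetrization).

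Finally, for the approximation term, $1$-Lipschitzness of $\ell$ gives $\cE(\widetilde P)-\cE(P_{\star,k})\le\bbE_{\bx}|\widetilde P(\bx)-P_{\star,k}(\bx)|$, and by construction this difference is exactly the discarded tail $\sum_{l=2}^k\sum_{j>r_l}\lambda_{lj}\langle\bu_{lj},\bx\rangle^l$. Bounding $|\langle\bu_{lj},\bx\rangle|^l\le 1$ and invoking the bound $|\lambda_{lj}|\le C_1\exp(-C_2 j^\gamma)$ from Assumption~\ref{assumption:singular_value_decay}, the tail $\sum_{j>r_l}C_1\exp(-C_2 j^\gamma)$ is controlled by an integral comparison, yielding a bound of the form $\tfrac{C_1}{C_2}\exp(-r_l^\gamma)$; summing over $l\ge 2$ gives the middle term $\tfrac{C_1}{C_2}\sum_{i=2}^k\exp(-r_i^\gamma)$. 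Adding the estimation and approximation bounds, and absorbing the McDiarmid fluctuations into the single $5\sqrt{\log(4/\delta)/n}$ term, completes the proof.
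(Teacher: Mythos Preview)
Your proposal is correct and mirrors the paper's proof almost exactly: the same truncated-comparator $\widetilde P$ (obtained by keeping the first $r_l$ singular triples of $P_{\star,k}$ at each degree), the same three-term split $[\cE(\widehat P)-\widehat\cE_n(\widehat P)]+[\widehat\cE_n(\widehat P)-\widehat\cE_n(\widetilde P)]+[\widehat\cE_n(\widetilde P)-\cE(P_{\star,k})]$ with Hoeffding on the fixed $\widetilde P$ and the Lipschitz tail sum $\sum_{j>r_l}|\lambda^\star_{lj}|$ controlled by an integral, and the same degree-by-degree Rademacher decomposition. The one place the paper departs from your sketch is precisely the step you flag as the obstacle: instead of contracting $t\mapsto t^l$, the paper's Lemma~\ref{lem:appendix_empirical_rademacher_l2} peels off $\blambda$ by Cauchy--Schwarz (yielding the $\sqrt{r_l}$) and then bounds $\langle\bu_j,\tilde\bx_l\rangle^{2l}\le\lVert\bu_j\rVert_2^{2l}\lVert\tilde\bx_l\rVert_2^{2l}$ directly inside the square root, which delivers the $(B_{u,2})^l$ factor without ever needing a Lipschitz constant for the power map.
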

The above result demonstrates an expected scaling of the generalization error, matching the bounds obtained for $L_1$ (see Appendix Section~\ref{sec:appendix_full_proofs}) and $L_2-$regularized linear classifiers~\citep{wainwright2019high}. We that the rank $r = \lVert \br \rVert_1$ has an $\cO(r)$ dependence on the Rademacher complexity. This highlights a trade-off between optimization and generalization: a larger $r$ gives a lower approximation error (smaller second term), but a larger model complexity (larger first term). Observe, however, that even when $r = \Omega((\log d)^p)$ for some $p\geq 1$, the approximation error (second term) diminishes as $o(\frac{1}{d^{p\gamma}})$. This suggests that in practice, one only needs a cumulative rank (poly)logarithmic in $d$. The above result also provides a non-asymptotic variant of Proposition~\ref{prop:approximation}; as $n, r\rightarrow\infty$, $\cE(\widehat P_{\br, k})\rightarrow \cE(P_{\star, k})_+$.
\section{Offline Experiments}
\label{sec:experiments}
Here we evaluate SPAM against a set of benchmark algorithms on both classification and regression problems. The baseline approaches we consider are (see Appendix Section~\ref{sec:appendix_experiment_details} for precise details):
\begin{itemize}[leftmargin=*,itemsep=0pt,topsep=0pt,parsep=0pt,partopsep=0pt]
    \item {\bf Deep Neural Networks (DNN)}: These are standard multi-layer fully-connected neural networks included to demonstrate an upper-bound on performance. These are explained via the perturbation-based LIME~\citep{ribeiro2016should} and SHAP~\citep{lundberg2017unified} methods in Section~\ref{sec:human_evaluations}.
    \item {\bf Linear / Logistic Regression}: These serve as our baseline interpretable models. We learn both $L_1$ and $L_2-$regularized models on unary and pairwise features (${d \choose 2}$ features) with minibatch SGD.
    \item {\bf Gradient Boosted Trees (XGBoost)}: We use the library {\tt xgboost}. This baseline is mainly to compare accuracy, as the number of trees required are typically large and are hence uninterpretable.
    \item {\bf Explainable Boosting Machines (EBMs)}~\citep{lou2013accurate}: EBMs use millions of shallow bagged trees operating on each feature at a time. Note that this approach is not scalable to datasets with many features or multi-class problems. We report scores on the datasets where we successfully trained EBMs without sophisticated engineering, using the {\sf interpretml} library~\citep{nori2019interpretml}.
    \item {\bf Neural Additive Models (NAMs)}~\citep{agarwal2021neural}: These models are neural network extensions of prior EBMs. Note that this method also does not scale to some datasets.
\end{itemize}

{\bf Training Setup}. SPAM is implemented by learning $L_1/L_2$-regularized variants by minibatch SGD implemented in PyTorch. For regression tasks, we measure the root mean squared error (RMSE). For binary classification, we report the area under the ROC (AUROC), for multi-class classification, we report the top-1 accuracy (Acc@1), and finally, for object detection, we report mean Average Precision (mAP). We tune hyperparameters via random sampling approach over a grid. Note that for all experiments, both NAM and {\sc SPAM-Neural} have identical MLP structures to ensure identical approximation power. For definitions of metrics and hyperparameter ranges, see Appendix Section~\ref{sec:appendix_experiment_details}.

\subsection{Measuring Benchmark Performance}
\setlength{\tabcolsep}{1.5pt}
\begin{table}[t]
\caption{Evaluation of SPAM on benchmarks against prior work. ($\uparrow$): higher is better, ($\downarrow$): lower is better, $^{**}$: variance across trials is $<0.001$, $^{*~}$: variance across trials is $<0.005$.  {\color{maroon}\bf Boldface red} denotes the best {\color{maroon}\bf black-box} model, {\color{OliveGreen}\bf boldface green} denotes the best {\color{OliveGreen}\bf interpretable} model, {\bf boldface} denotes where SPAM Order 3 is best. Runs averaged over 10 random trials with optimal hyperparameters.}
\label{tab:tabular_benchmark_comparison}
\centering
\small
\begin{tabular}{cccccccc}
\hline
\hline 
\multirow{3}{*~}{Model} & {\bf Regression} & {\bf Binary} & \multicolumn{4}{c}{\bf Multi-Class Classification} & {\bf Obj. Det.} \\ 
& RMSE ($\downarrow$) & AUROC ($\uparrow$) & \multicolumn{4}{c}{Accuracy ($\uparrow$)}  & mAP ($\uparrow$) \\ \cline{2-8}
& \multicolumn{1}{c}{CH} & \multicolumn{1}{c}{FICO} & \multicolumn{1}{c}{CovType} & \multicolumn{1}{c}{News} & \multicolumn{1}{c}{CUB} & \multicolumn{1}{c}{iNat} & \multicolumn{1}{c}{CO114}\\ \hline
\multicolumn{8}{c}{ {\color{OliveGreen}\bf Interpretable} Baselines} \\
\hline
Linear (Order 1) & $0.7354^{**}$ & $0.7909^{**}$ & $0.7254^{**}$ & $0.8238^{**}$ & $0.7451^{**}$ & $0.3932^{**}$ & $0.1917^{**}$\\
Linear (Order 2) & $0.7293^{**}$ & $0.7910^{**}$ & $0.7601^{**}$ & --- & $0.7617^{**}$ & $0.4292^{**}$ & $0.2190^{**}$\\
EBMs (Order 1) & $0.5586^{**}$ & $0.7985^{**}$& $0.7392^{**}$& --- & --- & --- & --- \\
EB$^2$Ms (Order 2) & $0.4919^{**}$ & $0.7998^{**}$& --- & --- & --- & --- & --- \\
NAM & $0.5721^{*~}$ & $0.7993^{**}$ & $0.7359^{**}$ & --- & $0.7632^{**}$ & $0.4194^{**}$ & $0.2056^{**}$ \\ \hline
\multicolumn{8}{c}{ {\color{maroon}\bf Uninterpretable} Black-Box Baselines} \\ \hline
XGBoost  &{\color{maroon}$\mathbf{0.4428^{**}}$} & $0.7925^{**}$ & $0.8860^{**}$ & $0.7677^{**}$ & $0.7186^{**}$ & --- & --- \\ 
DNN & $0.5014^{**}$ & {\color{maroon}$\mathbf{0.7936^{**}}$} & {\color{maroon}$\mathbf{0.9694^{**}}$} & {\color{maroon}$\mathbf{0.8494^{**}}$} & {\color{maroon}$\mathbf{0.7684^{**}}$} & {\color{maroon}$\mathbf{0.4584^{**}}$} & {\color{maroon}$\mathbf{0.2376^{**}}$}\\ \hline
\multicolumn{8}{c}{ {\color{OliveGreen}\bf Our Interpretable Models} } \\ \hline
SPAM (Linear, Order 2)& $0.6474^{**}$ & $0.7940^{**}$ & $0.7732^{**}$ & {\color{OliveGreen}$\mathbf{0.8472^{**}}$} & {\color{OliveGreen}$\mathbf{0.7786^{**}}$} & $0.4605^{**}$ & {\color{OliveGreen}$\mathbf{0.2361^{**}}$}\\
SPAM (Neural, Order 2)& {\color{OliveGreen}$\mathbf{0.4914^{**}}$} & {\color{OliveGreen}$\mathbf{0.8011^{*~}}$} & {\color{OliveGreen}$\mathbf{0.7770^{**}}$}& --- & $0.7762^{**}$ & {\color{OliveGreen}$\mathbf{0.4689^{**}}$} & ---\\\hline
        
SPAM (Linear, Order 3)& $0.6410^{**}$ & $0.7945^{**}$ & $0.8066^{**}$ & $\mathbf{0.8520^{*~}}$ & $0.7741^{**}$& $0.4684^{**}$ & $\mathbf{0.2368^{*~}}$\\ 
SPAM (Neural, Order 3)& $0.4865^{*~}$&  $\mathbf{0.8024^{*~}}$& $0.8857^{**}$& --- & $0.7753^{**}$ & $\mathbf{0.4722^{**}}$ & ---\\ \hline
\hline
\end{tabular}
\end{table}
We select tasks to explore a variety of settings from regression to multi-class classification, and also explore different dataset scales, from a few hundred samples and tens of features to 100K-scale datasets (both in the number of samples and data dimensionality), while ensuring that the features are interpretable. Our datasets are summarized in Table~\ref{tab:tabular_datasets}. Please see Appendix Section~\ref{sec:appendix_dataset_details} for details. For all datasets with no defined train-val-test split, we use a fixed random sample of 70\% of the data for training, 10\% for validation and 20\% for testing. For the 20 Newsgroups dataset, we split the pre-defined training split 7:1 for training and validation, respectively.
\begin{table}[t]
\setlength{\tabcolsep}{1.5pt}
    \centering
    \small
    \caption{Tabular Datasets\label{tab:tabular_datasets}}
    \begin{tabular}{ccccc}
    \hline \hline
        Name & California Housing (CH) & FICO & Cover Type (CovType) & Newsgroups  \\ \hline
        Source & \citet{pace1997sparse} & \citet{fico_community}  & \citet{blackard1999comparative} & \citet{lang1995newsweeder} \\
        Instances & 20,640 & 10,459 & 581,012 & 18,828 \\
        Features & 8 & 23 & 54 & 146,016 \\
        Classes & - & 2 & 7 & 20 \\
        Feature Type & Numeric & Mixed & Mixed & TF-IDF \\ \hline \hline
    \end{tabular}
\end{table}

In an effort to scale interpretable approaches beyond tabular datasets, we consider benchmark problems using the ``Independent Concept Bottleneck'' framework of~\citet{koh2020concept} for image classification and object detection. We use a convolutional neural network (CNN) {\bf backbone} ResNet-50~\citep{he2016deep} that is trained to predict {\em interpretable concepts} (e.g., parts of object) from images. After training the {\bf backbone}, we extract predicted concepts for all inputs, and learn an interpretable classifier {\bf head} from these concepts. In experiments, the backbone remains identical for all comparisons, and we only compare the {\bf head} classifiers (see Appendix Section~\ref{sec:appendix_concept_bottleneck_implementation} for more details). We select three datasets for evaluation in the concept bottleneck setting:
\begin{enumerate}[leftmargin=*,itemsep=0pt,topsep=0pt,parsep=0pt,partopsep=0pt]
    \item {\bf Caltech-UCSD Birds (CUB-200)}~\citep{wah2011caltech}: This is a fine-grained visual categorization dataset where one has to classify between 200 species of birds. The interpretable concepts are 278 {\em binary} bird attributes, e.g., the shape of the beak, the color of the wings, etc. The dataset has 5,994 training and 5,794 validation images. See Appendix Section~\ref{sec:appendix_cub_details} for more details. 
    \item {\bf iNaturalist ``Birds''}~\citep{van2018inaturalist, van2021benchmarking}: iNaturalist can be thought of as a larger version of the previous dataset. We only select the ``Birds'' super-category of samples, which has 414,000 training and 14,860 validation instances across 1,486 classes. Since the iNaturalist dataset does not contain dense part and attribute annotations, we use the {\em predicted} concepts from the CUB-200 backbone model extracted on all samples. 
    \item {\bf Common Objects Dataset} (CO114, Proprietary): Here we consider, for the first time, a concept bottleneck model for object detection. We construct a dataset involving common household objects (e.g., bed, saucer, tables, etc.) with their bounding box annotations. For each bounding box, we collect 2,618 interpretable annotations (e.g., parts, color, etc.). The dataset has 2,645,488 training and 58,525 validation samples across 115 classes with 2,618 interpretable concepts. We report the mean Average Precision (mAP) metric. For more details please refer to Appendix Section~\ref{sec:appendix_w3ig_details}.
\end{enumerate}

Our results are summarized in Table~\ref{tab:tabular_benchmark_comparison}. There are four main takeaways from the results. First, observe that both {\sc SPAM-Linear} and {\sc SPAM-Neural} comfortably outperform their prior interpretable counterparts on all datasets (e.g., Order 2 {\sc SPAM-Linear} outperforms all linear methods -- even the full rank pairwise model), and {\sc SPAM-Neural} comfortably outperforms all non-linear baselines. Next, observe that in all datasets but CoverType, second degree interactions suffice to match or even outperform DNN performance. We will discuss CoverType in detail in the next paragraph. Thirdly, observe that {\sc SPAM} models are as scalable as the black-box approaches, whereas prior work (e.g., NAM and EBMs) are not. In the case of NAM, we were unable to scale due to the sheer number of parameters required to model large datasets, e.g., Newsgroups ($\approx$900M parameters), and for EBMs, the training time and memory requirement increased dramatically for large datasets. Furthermore, EB$^2$Ms do not even support multi-class problems. Finally, observe that for many problems, we do not require {\em non-linear} feature transformations once we model feature interactions: e.g., iNat and CUB.

{\bf CoverType Dataset.} XGBoost and DNNs perform substantially better on CoverType compared to linear or neural SPAM. On analysis, we found that existing SPAM models underfit on CoverType, and hence we increased the total parameters for {\sc SPAM-Neural} via {\em subnets}, identical to NAMs~\citep{agarwal2021neural}, where each feature is mapped to $s$ non-linear features (as opposed to 1 originally). With $s=8$ subnets, the performance of {\sc SPAM-Neural} (order 3) improves to $0.9405$. In comparison, NAMs with $8$ subnets provides a lower accuracy of $0.7551$.

\subsection{Ablation Studies}
\label{sec:ablation}

{\bf Rank and Degree of Interactions}.
\label{sec:ablation_rank_degree}
\begin{figure}
    \centering
    \includegraphics[width=\textwidth]{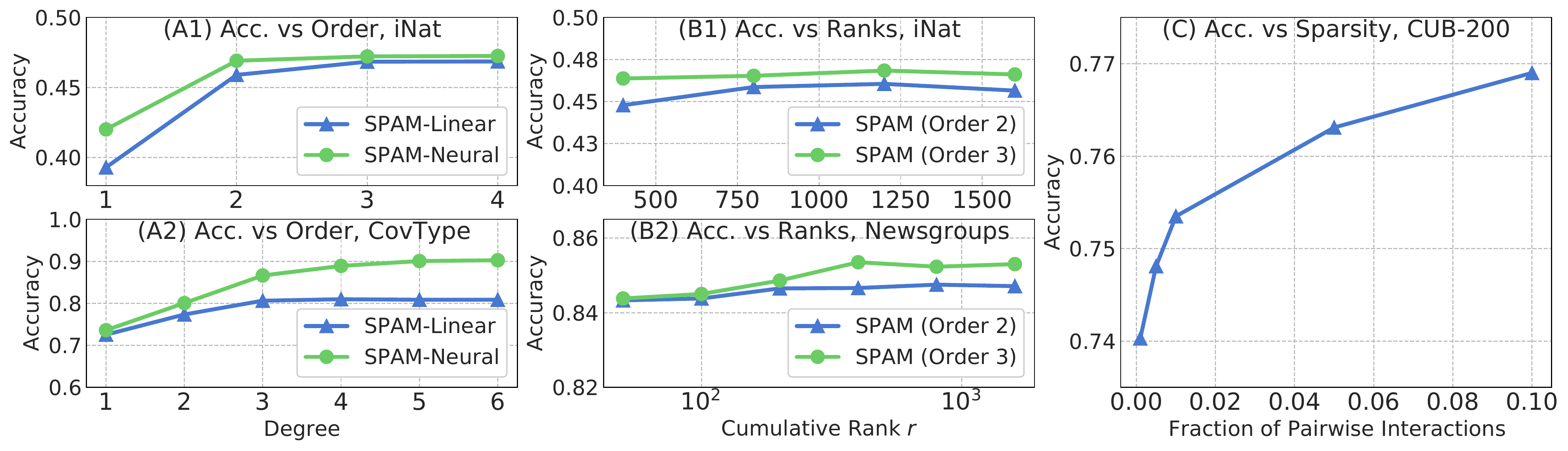}
    \caption{Ablation of Accuracy with: (A1, A2) Degree $k$; (B1, B2) Rank $r$; (C) $L_1$ Sparsity. \label{fig:ablation_ranks_degree}}
\end{figure}
By Proposition~\ref{prop:approximation}, it is natural to expect the approximation quality to increase with the degree $k$ and rank $r$, i.e., as we introduce more higher-order interactions between features. We ablate the degree $k$ on the tabular benchmark {\bf CoverType} and concept bottleneck benchmark {\bf iNaturalist (Birds)}, as summarized in Figure~\ref{fig:ablation_ranks_degree}{\color{maroon}A}. We observe, as expected, that increasing the degree $k$ leads to moderate improvements beyond $k \geq 3$, but with diminishing returns. Similarly, we examine the effect of the cumulative rank $r$ on performance on the {\bf Newsgroups} and {\bf iNaturalist (Birds)} datasets (Figure~\ref{fig:ablation_ranks_degree}{\color{maroon}B}); we observe that performance is sufficiently insensitive to $r$, and plateaus after a while. We imagine that as $r$ increases, model complexity will dominate and performance will likely begin to decrease, matching the full-rank performance at $r = \cO(d^2)$ for pairwise models.
\paragraph{Sparsity in Higher-Order Relationships.}
A requirement for interpretability is to ensure that the learned models can be explained with a few interpretable concepts. Since the number of higher-order combinations increases with $k$, we examine sparsity to limit the number of active feature assignments. We penalize the objective in Equation~\ref{eqn:optimization_fn} with a regularization term that inhibits dense feature selection. If $\bU = \{\{\bu_{li} \}_{i=1}^{r_l}\}_{l=1}^k$ denotes all the basis vectors in matrix form, we add the penalty $\cR(\btheta) \triangleq \lVert \bU \rVert_1$. This ensures that every basis $\bu$ only captures limited interactions between features, and hence the overall complexity of the model is limited. We examine accuracy as a function of the fraction of non-zero pairwise interactions for a degree 2, rank 800 SPAM on CUB-200 in Figure~\ref{fig:ablation_ranks_degree}{\color{maroon}C}, and find that only $6\%$ of the possible interactions suffice to obtain competitive performance.

\paragraph{Examining Assumption~\ref{assumption:singular_value_decay} (Spectral Decay).} The spectral decay assumption presented in Assumption~\ref{assumption:singular_value_decay} is crucial to obtain reasonable generalization bounds (Theorem~\ref{thm:primary_generalization_bound}). To test this assumption, we examine the spectra of a {\sc SPAM-Linear} (Order 2) model for all 200 of CUB-200 classes. These are depicted in Figure~\ref{fig:amt_results}{\color{maroon}A}. Furthermore, we fit an exponential model on the decay itself, and obtain (in line with Assumption~\ref{assumption:singular_value_decay}) $\gamma = 3$, $C_1 = 0.54$ and $C_2 = 0.006$, all in accordance with the assumption.

\subsection{Additional Experiments}
\subsubsection{Comparisons on Common Benchmarks}
In addition to the 7 datasets we considered earlier, we evaluate on 9 further benchmark datasets commonly used in the interpretability literature. We evaluate on the MIMIC2, Credit and COMPAS datasets as presented in~\citet{agarwal2021neural}, and the Click, Epsilon, Higgs, Microsoft, Yahoo and Year datasets from~\citet{chang2021node}. The result of this is summarized in Table~\ref{tab:additional_experiments}. We observe that SPAM models are competitive with prior work, and outperforming prior work on a number of tasks (the best interpretable model is in \textbf{bold}).

\subsubsection{Runtime Evaluation}
We provide a comparison of SPAM runtimes on 4 different datasets to establish their scalability. We consider the California Housing (CH,~\citet{pace1997sparse}, 8 features), FICO HELOC (FICO,~\citet{fico_community}, 23 features), CoverType (CovType,~\citet{blackard1999comparative}, 54 features) and 20 Newsgroups (News,~\citet{lang1995newsweeder}, 146,016 features), and evaluate the throughput of the optimal Linear and Neural SPAM models against Neural Additive Models (NAM,~\citet{agarwal2021neural}) and MLPs. Table~\ref{tab:speed} describes the throughput, where we observe that Linear SPAM is more efficient than MLPs, and NeuralSPAM is orders of magnitude faster than NAM, which does not scale to higher orders.
\begin{table}[t]
\centering\small
\caption{\label{tab:speed}Throughput benchmarking (higher is faster).}
\begin{tabular}{l|c|c|c|c}
\hline \hline
\multirow{2}{*}{\textbf{Method}} & \textbf{CH} & \textbf{FICO} & \textbf{CovType} & \textbf{News} \\ \cline{2-5}
& \multicolumn{4}{c}{Throughput (images / second)} \\ \hline
NAM~\citep{agarwal2021neural} & $5\times10^5$ & $1.2\times10^5$ & $8\times10^4$ & 23 \\
NAM (Order 2) & $1.1\times10^4$ & $6\times10^3$ & $3\times10^3$ & - \\ \hline
LinearSPAM (Order 2) & $6.1\times10^7$ & $6.7\times10^7$ & $6.1\times10^7$ & $2.6\times10^6$ \\
NeuralSPAM (Order 2) & $1.7\times10^5$ & $7.9\times10^3$ & $4.1\times10^3$ & - \\
LinearSPAM (Order 3) & $3.2\times10^7$ & $3.7\times10^7$ & $3.9\times10^7$ & $1.8\times10^5$ \\
NeuralSPAM (Order 3) & $1.1\times10^5$ & $5.3\times10^3$ & $2.6\times10^3$ & - \\ \hline
MLP & $1.3\times10^7$ & $1.3\times10^7$ & $1.3\times10^7$ & $2.2\times10^5$ \\ \hline \hline
\end{tabular}
\end{table}

\begin{table}[t]
\small
\caption{\label{tab:additional_experiments} Comparison against state-of-the-art interpretable neural networks on additional benchmarks.}
\begin{tabular}{lccccccccc}
\hline\hline
\multicolumn{1}{c}{\textbf{Method}} & \begin{tabular}[c]{@{}c@{}}MIMIC2 \\ (AUC)\end{tabular} & \begin{tabular}[c]{@{}c@{}}Credit\\ (AUC)\end{tabular} & \begin{tabular}[c]{@{}c@{}}COMPAS\\ (AUC)\end{tabular} & \begin{tabular}[c]{@{}c@{}}Click\\ (ERR)\end{tabular} & \begin{tabular}[c]{@{}c@{}}Epsilon\\ (ERR)\end{tabular} & \begin{tabular}[c]{@{}c@{}}Higgs \\ (ERR)\end{tabular} & \begin{tabular}[c]{@{}c@{}}Microsoft\\ (MSE)\end{tabular} & \begin{tabular}[c]{@{}c@{}}Yahoo\\ (MSE)\end{tabular} & \begin{tabular}[c]{@{}c@{}}Year \\ (MSE)\end{tabular} \\
 \hline
NAM~\citep{agarwal2021neural} & 0.8539 & 0.9766 & 0.7368 & 0.3447 & 0.1079 & 0.2972 & 0.5824 & 0.6093 & 85.25 \\
NODE-GAM~\citep{chang2021node} & 0.8320 & 0.9810 & 0.7420 & \textbf{0.3342} & 0.1040 & 0.2970 & 0.5821 & 0.6101 & 85.09 \\ \hline
LinearSPAM (Order 2) & 0.8514 & 0.9836 & \textbf{0.7426} & 0.3791 & \textbf{0.1011} & 0.2881 & 0.5710 & 0.5923 & 81.30 \\
NeuralSPAM (Order 2) & \textbf{0.8664} & \textbf{0.9850} & 0.7411 & 0.3348 & 0.1020 & \textbf{0.2750} & \textbf{0.5671} & \textbf{0.5869} & \textbf{79.99} \\ \hline
XGBoost & 0.8430 & 0.9780 & 0.7440 & 0.3334 & 0.1112 & 0.2328 & 0.5544 & 0.5420 & 78.53 \\
\hline\hline
\end{tabular}
\end{table}

\section{Human Subject Evaluations}
\label{sec:human_evaluations}


{\bf Experiment Methodology}. We now evaluate how well explanations from {\bf SPAM} fare \underline{\em in a practical setting} with non-interpretable benchmarks such as black-box models equipped with {\em post-hoc} explanations. Our objective is to mimic a practical setting, where the model user must decide between a fully-interpretable model vs. a black-box classifier with {\em post-hoc} explainability. Our experiment design is a Prediction Task~\citep{hoffman2018metrics, muramatsu2001transparent}, where the objective for the users is to guess what the model predicted given an explanation. The motivation of such a design is to ascertain both the \underline{\em faithfulness} and \underline{\em interpretability} of the model explanations.
\begin{table}[t]
    \centering
    \small
    \caption{Feature importances for interpretable models used in human subject evaulations.\label{tab:importances}}
    \begin{tabular}{cc}
    \hline \hline
        Model & Importances for input $\bx = \{x_1, ..., x_d\}$\\ \hline
        Linear / {\em post-hoc} Linear & $w_i\cdot x_i$, $i \in \{1, ..., d\}$ \\
        {\sc SPAM} (Linear, Order 2) & $u_{1i}\cdot x_i, i \in \{1, ...,d\}$ and $\left(\sum_{l=1}^{r_2} \lambda_{2l}u_{2li}u_{2lj}\right)\cdot \sqrt{x_i\cdot x_j}, i,j \in [d]$ \\ \hline
        \hline
    \end{tabular}
\end{table}

{\bf Computing Explanations}. We first compute the feature importances for any class $c$, as the {\em contribution} of any specific feature in the logit for class $c$ (for binary classification or regression, there is only one class). For the Logisitc Regression model, we simply use the contribution of each feature to the prediction. For SPAM-Linear (Order 2), we still use each feature's contribution, but features now can be unary ($x_i$) or pairwise ($x_i$ and $x_j$). As black-box {\em post-hoc}
baselines, we consider DNNs with LIME~\citep{ribeiro2016should} and KernelSHAP~\citep{lundberg2017unified} to generate {\em post-hoc} linear explanations. See Table~\ref{tab:importances} for the formulation of importance for the models we use in experiments.  

{\bf Experiment Design}
We conduct a separate experiment for each model $\bf M$ with explanations of length $E$ by selecting $E$ most important features. 
Each such experiment runs in two phases - {\em training} and {\em testing}. In the {\em training} phase, participants are shown 8 images of birds and their corresponding explanations to develop their mental model. Four of these images have been predicted as class {\bf A} by the model, and the remaining are predicted as class {\bf B}. 
They then move on to the {\em testing} phase, where they are successively shown the explanations for 5 unseen images (which the model could have predicted as either {\bf A} or {\bf B}), and the users must answer \underline{\em ``given the explanation, which class (of {\bf A} or {\bf B}) did the model predict?''}. If they desire, the users can move back to the training phase at any time to revise model explanations. 
We do not show the corresponding images in the test  phase, as we want the user feedback to rely solely on the faithfulness of explanations. We measure the {\em mean user accuracy} on predicting model decisions. For more details on the interface, please see Appendix Section~\ref{sec:appendix_amt_details}.
For each experiment, we follow an independent-subjects design,
and gathered a total of 150 participants per experiment. Each task (corresponding to one model-explanation pair) lasted 3 minutes. The participants were compensated with $\$$0.75 USD ($\$$15/hour), and all experiments were run on Amazon Mechanical Turk (AMT,~\citet{buhrmester2016amazon}) with a total cost of $\$$3000. To remove poorly performing participants, we only select those that get the first decision correct, and compute the mean user accuracy using the remaining 4 images.

{\bf Study \#1: Comparing Black-Box and Transparent Explanations}. We compare different interpretable models with a {\em fixed} explanation length of $E=7$. 
Our objective is to assess the interpretability of pairwise interactions compared to black-box and linear explanations, and to re-answer whether it was even necessary to have fully-interpretable models compared to black-box models with {\em post-hoc} explanations. We used samples from the 5 pairs of classes, that is {\bf A} and {\bf B}), from CUB-200 dataset (see Section~\ref{sec:appendix_amt_details} for images). The results from this experiment are summarized in Figure~\ref{fig:amt_results}{\color{maroon}B}. Observe that with SPAM, the mean user accuracy (mUA) is substantially higher (0.71) compared to both linear (0.67) and {\em post-hoc} LIME (0.65). We would like to remark that even for $E=7$, SPAM has an average of $3.1$ pairwise terms in each explanation, and therefore the higher-order terms are substantial. A one-sided t-test~\citep{freund1967modern} provides a significance ($p-$value) of $2.29\times 10^{-4}$ and a test statistic of $3.52$ for the SPAM mUA being higher than LIME, and correspondingly for Linear, we get a significance ($p-$value) of $0.03801$ and a test statistic of $1.666$. Hence SPAM's improvements in interpretability are statistically significant over both linear and {\em post-hoc} approaches.

\begin{figure}
    \centering
    \includegraphics[width=\textwidth]{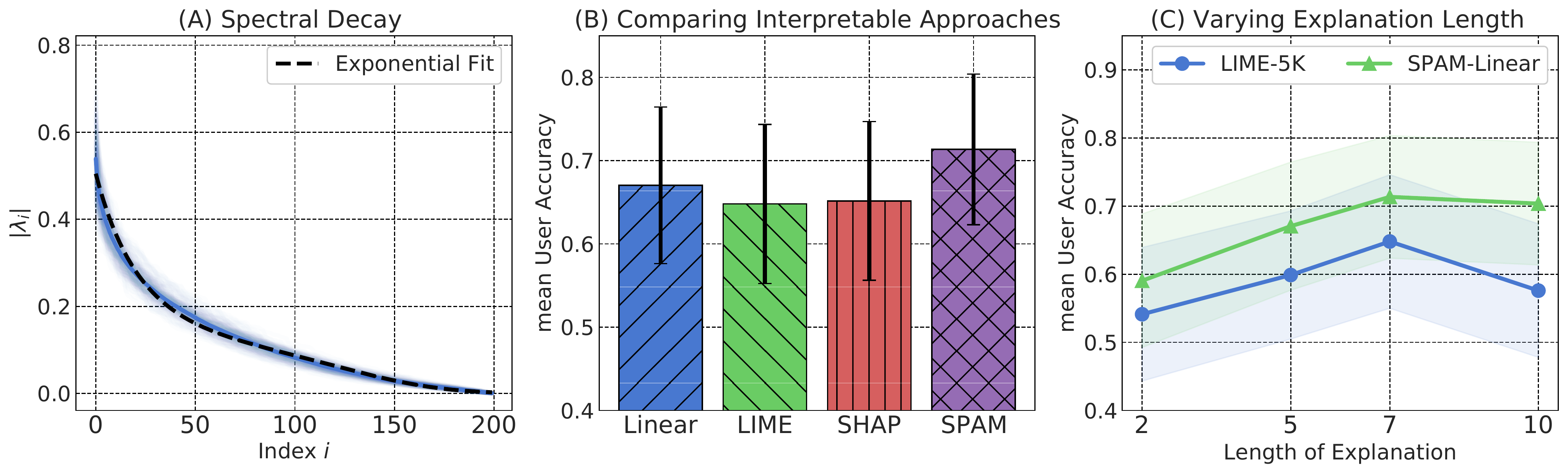}
    \caption{(A) Spectral Decay on CUB-200; Human Subject Evaluation results for (B) Comparing Black-box and glass-box explanations, and (C) LIME vs. SPAM with different explanation lengths.\label{fig:amt_results}}
\end{figure}

{\bf Study \#2: Varying Explanation Length}. One can argue that the increased interpretability of polynomial approaches is due to the fact that each higher-order ``term'' involves multiple features (e.g., pairwise involves two features), and hence the increased mean user accuracy (mUA) is due to the larger number of features shown (either individually or as pairs). We examine this hypothesis by varying the explanation length $E$ for both SPAM-Linear (Order 2) and LIME, which consequently increases the number of terms seen for both approaches. We observe in Figure~\ref{fig:amt_results}{\color{maroon}C} that regardless of explanation method, mUA is maximum at $E=7$, and introducing more terms decreases interpretability.

\section{Discussion and Conclusion}
\label{sec:conclusion}
We presented a simple and scalable approach for modeling higher-order interactions in interpretable machine learning. Our approach alleviates several of the  concerns with existing algorithms and focuses on presenting a viable alternative for practical, large-scale machine learning systems. In addition to offline experimental guarantees, we demonstrate by an extensive user study that these models are indeed more interpretable in practice, and therefore can readily substitute {\em post-hoc} interpretability without any loss in performance. 
Moreover, to the best of our knowledge, our work provides the first incremental analysis of interpretability and performance: we show how progressively increasing the model complexity (with higher order interactions) brings better performance while compromising interpretability in practice, and the ``sweet spot'' appears to rest at pairwise interactions.

We have several follow-up directions. First, given that our model provides precise guarantees on generalization, it is a feasible starting point to understanding tradeoffs between privacy and explainability from a rigorous perspective. Next, one can consider utilizing SPAM to understand failure modes and spurious correlations directly. Furthermore, SPAM-esque decompositions can also be useful in other domains beyond interpretability, e.g., language and vision, where modeling higher-order interactions is challenging due to the curse of dimensionality.

\bibliography{neurips2021}
\bibliographystyle{abbrvnat}
\clearpage

\section*{Checklist}

\begin{enumerate}

\item For all authors...
\begin{enumerate}
  \item Do the main claims made in the abstract and introduction accurately reflect the paper's contributions and scope?
    \answerYes{}
  \item Did you describe the limitations of your work?
    \answerYes{See conclusion.}
  \item Did you discuss any potential negative societal impacts of your work?
    \answerYes{}
  \item Have you read the ethics review guidelines and ensured that your paper conforms to them?
    \answerYes{}
\end{enumerate}

\item If you are including theoretical results...
\begin{enumerate}
  \item Did you state the full set of assumptions of all theoretical results?
    \answerYes{See Appendix Section~\ref{sec:appendix_full_proofs}.}
        \item Did you include complete proofs of all theoretical results?
    \answerYes{See Appendix Section~\ref{sec:appendix_full_proofs}.}
\end{enumerate}

\item If you ran experiments...
\begin{enumerate}
  \item Did you include the code, data, and instructions needed to reproduce the main experimental results (either in the supplemental material or as a URL)?
    \answerYes{}
  \item Did you specify all the training details (e.g., data splits, hyperparameters, how they were chosen)?
    \answerYes{Please see Section~\ref{sec:experiments} and supplementary material.}
        \item Did you report error bars (e.g., with respect to the random seed after running experiments multiple times)?
    \answerYes{}
        \item Did you include the total amount of compute and the type of resources used (e.g., type of GPUs, internal cluster, or cloud provider)?
    \answerYes{}
\end{enumerate}

\item If you are using existing assets (e.g., code, data, models) or curating/releasing new assets...
\begin{enumerate}
  \item If your work uses existing assets, did you cite the creators?
    \answerYes{Please see Sections~\ref{sec:human_evaluations} and~\ref{sec:experiments}.}
  \item Did you mention the license of the assets?
    \answerYes{See accompanying supplementary material.}
  \item Did you include any new assets either in the supplemental material or as a URL?
    \answerNA{No new assets except code, see 3(a).}
  \item Did you discuss whether and how consent was obtained from people whose data you're using/curating?
    \answerYes{}
  \item Did you discuss whether the data you are using/curating contains personally identifiable information or offensive content?
    \answerYes{}
\end{enumerate}

\item If you used crowdsourcing or conducted research with human subjects...
\begin{enumerate}
  \item Did you include the full text of instructions given to participants and screenshots, if applicable?
    \answerYes{See supplementary material.}
  \item Did you describe any potential participant risks, with links to Institutional Review Board (IRB) approvals, if applicable?
    \answerNA{No participant risks were identified in the experiment design.}
  \item Did you include the estimated hourly wage paid to participants and the total amount spent on participant compensation?
    \answerYes{See Section~\ref{sec:human_evaluations}.}
\end{enumerate}

\end{enumerate}
\newpage
\appendix
\section{Deferred Proofs}
\label{sec:appendix_full_proofs}
\subsection{Proof of Proposition~\ref{prop:convexity}}
This follows from noting that the Hessian of the objective function is positive definite for all $\lambda > 0$ when the constraints in the proposition hold.
\subsection{Proof of Proposition~\ref{prop:approximation}}
This follows directly from the Stone-Weierstrass Theorem~\citep{stone1948generalized, weierstrass1885analytic} and Theorem 1 of~\cite{chui1993realization}.
\subsection{Proof of Theorem~\ref{thm:primary_generalization_bound}}
We state Theorem~\ref{thm:primary_generalization_bound} in full here.
\begin{theorem}
\label{thm:appendix_primary_generalization_bound}
Let $\ell$ be a 1-Lipschitz loss, $\delta \in (0, 1]$ and the generalization error for the optimal degree $k$ polynomial $P_{\star, k}$ be given by $\cE(P_{\star_, k})$ and the training error for the ERM SPAM $\widehat P_{\br, k}$, i.e., $\widehat P_{\br, k} = \argmin_{P} \sum_{i=1}^n\ell(P(\bx_i, y_i)$ with ranks $\br = [1, r_2, ..., r_k]$ be given by $\widehat\cE_n(\widehat P_{\br, k})$. Then, for $L_2-$regularized models, where $\lVert \bu_{ij} \rVert_2 \leq B_{u, 2}$ for all $i\in [k], j \in [r_i]$, and $\lVert \blambda \rVert_2 \leq B_{\lambda, 2}$ where $\blambda = \{\{\lambda_{ij}\}_{j=1}^{r_i}\}_{i=1}^k$, we have that with probability at least $1-\delta$ there exists an absolute constant $C$ such that,
\begin{align*}
    \cE(\widehat P_{\br, k})- \cE(P_{\star, k})  \leq 2B_{\lambda, 2} \left(\sum_{l=1}^k(B_{u, 2})^l\sqrt{r_l}\right)\sqrt{\frac{d}{n}}  + C\cdot\left(\sum_{i=2}^k\exp(-r_i^\gamma)\right) + 5\sqrt{\frac{\log\left(4/\delta\right)}{n}}.
\end{align*}
For $L_1-$regularization, $\lVert \bu_{ij} \rVert_1 \leq B_{u, 1}$ for all $i\in [k], j \in [r_i]$, and $\lVert \blambda \rVert_1 \leq B_{\lambda, 1}$ where $\blambda = \{\{\lambda_{ij}\}_{j=1}^{r_i}\}_{i=1}^k$, we have with probability at least $1-\delta$ there exists an absolute constant $C$ such that,
\begin{align*}
    \cE(\widehat P_{\br, k}) - \cE(P_{\star, k}) \leq 2B_{\lambda, 1} \left(\sum_{l=1}^k(B_{u, 1})^l\right)\sqrt{\frac{\log(d)}{n}}  + C\cdot\left(\sum_{i=2}^k\exp(-r_i^\gamma)\right) + 5\sqrt{\frac{\log\left(4/\delta\right)}{n}}.
\end{align*}
\end{theorem}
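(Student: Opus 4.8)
The plan is to decompose the excess risk into an \emph{estimation} term and an \emph{approximation} term, bound the former by a Rademacher-complexity argument and the latter via the spectral-decay Assumption~\ref{assumption:singular_value_decay}, and finally collect the high-probability deviation through a bounded-differences (McDiarmid) inequality. Concretely, let $\cF_{\br,k}$ be the class of degree-$k$ SPAMs with ranks $\br$ obeying the stated norm constraints, let $P_{\star,\br,k} = \argmin_{P\in\cF_{\br,k}} \cE(P)$ be the population-optimal rank-$\br$ SPAM, and write
\begin{align*}
\cE(\widehat P_{\br,k}) - \cE(P_{\star,k}) = \big[\cE(\widehat P_{\br,k}) - \cE(P_{\star,\br,k})\big] + \big[\cE(P_{\star,\br,k}) - \cE(P_{\star,k})\big].
\end{align*}
The first bracket is the estimation error of the ERM inside the rank-constrained class, controlled by uniform convergence; the second is a pure approximation error between the best rank-$\br$ SPAM and the best full-rank degree-$k$ polynomial.

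For the estimation term, the standard symmetrization bound for a loss valued in $[0,1]$ gives, with probability $1-\delta$, $\cE(\widehat P_{\br,k}) - \cE(P_{\star,\br,k}) \le 4\,\mathfrak{R}_n(\cF_{\br,k}) + c\sqrt{\log(4/\delta)/n}$ once $\ell$ is $1$-Lipschitz, using Ledoux--Talagrand contraction to strip the loss. The crux is then to bound $\mathfrak{R}_n(\cF_{\br,k})$, and I would treat each homogeneous order $l$ separately. For a single basis the atom $\bx \mapsto \langle \bu_{li}, \tilde\bx_l\rangle^l$ is the composition of a linear functional (Rademacher complexity $\le B_{u,2}\sqrt{d/n}$ over the $L_2$ ball, and $\le B_{u,1}\sqrt{2\log(2d)/n}$ over the $L_1$ ball via Massart's lemma on the $\ell_\infty$-bounded domain) with the power $t\mapsto t^l$. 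Since $\cX$ is compact, $\lvert\langle\bu_{li},\tilde\bx_l\rangle\rvert$ is bounded by a constant multiple of $B_{u,2}$, so $t\mapsto t^l$ is Lipschitz with constant $\cO((B_{u,2})^{l-1})$ on the relevant interval; a second contraction yields a per-basis contribution of order $(B_{u,2})^l\sqrt{d/n}$. Aggregating the $r_l$ bases against $\lVert\blambda\rVert_2 \le B_{\lambda,2}$ by Cauchy--Schwarz introduces the $\sqrt{r_l}$ factor, and summing over $l$ reproduces the leading term $2B_{\lambda,2}\big(\sum_{l=1}^k (B_{u,2})^l\sqrt{r_l}\big)\sqrt{d/n}$, with its $L_1$ analogue carrying $\sqrt{\log d/n}$ instead.

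For the approximation term, I would take the decomposition of $P_{\star,k}$ guaranteed by Assumption~\ref{assumption:singular_value_decay} and define $P_{\star,\br,k}$ by truncating each order-$l$ sum to its $r_l$ largest-magnitude singular values, a feasible element of $\cF_{\br,k}$ (the decay ensures the norm budgets are met). Because $\ell$ is $1$-Lipschitz, $\cE(P_{\star,\br,k}) - \cE(P_{\star,k}) \le \bbE_{\bx}\lvert P_{\star,\br,k}(\bx) - P_{\star,k}(\bx)\rvert$, which is dominated by the discarded tail $\sum_{l=2}^k \sum_{j>r_l}\lvert\lambda_{lj}\rvert\cdot\sup_\bx\lvert\langle\bu_{lj},\tilde\bx_l\rangle\rvert^l$. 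Bounding $\sum_{j>r_l}\lvert\lambda_{lj}\rvert \le C_1\sum_{j>r_l}\exp(-C_2 j^\gamma)$ by comparison with $\int_{r_l}^\infty e^{-C_2 t^\gamma}\,dt \le \frac{1}{C_2}\exp(-C_2 r_l^\gamma)$, and absorbing the $\cO(1)$ domain factors, yields $\tfrac{C_1}{C_2}\sum_{i=2}^k \exp(-r_i^\gamma)$ (the order-$1$ term is exact, so the sum starts at $i=2$). I would then combine the two pieces and track constants through one McDiarmid application (each coordinate change perturbs the empirical risk by $\le 1/n$) to land the $5\sqrt{\log(4/\delta)/n}$ deviation. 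I expect the Rademacher computation for the power terms to be the main obstacle: obtaining the exact power $(B_{u,2})^l$ and cleanly absorbing the domain-dependent constants requires carefully choosing the interval on which the contraction for $t\mapsto t^l$ acts, and handling the $\lambda$-weighted sum of $r_l$ nonlinear atoms without losing a spurious factor of $r_l$. The approximation and concentration steps are comparatively routine.
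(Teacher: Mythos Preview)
Your proposal is correct and follows the same high-level architecture as the paper (decompose into estimation and approximation, control the first by Rademacher complexity summed over degrees, and the second by truncating $P_{\star,k}$ and invoking Assumption~\ref{assumption:singular_value_decay}), but two technical choices differ and are worth noting. First, the paper does not split at the population level; it inserts empirical risks and writes $\cE(\widehat P_{\br,k}) - \cE(P_{\star,k}) = [\cE(\widehat P_{\br,k}) - \widehat\cE_n(\widehat P_{\br,k})] + [\widehat\cE_n(\widehat P_{\br,k}) - \widehat\cE_n(\widetilde P_{\br,k})] + [\widehat\cE_n(\widetilde P_{\br,k}) - \cE(P_{\star,k})]$, where $\widetilde P_{\br,k}$ is exactly your truncation. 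The middle term is $\le 0$ by the ERM property, term $A$ is handled by the uniform Rademacher bound, and term $B$ is split further into a Hoeffding term for the fixed function $\widetilde P_{\br,k}$ plus the Lipschitz/spectral-tail argument you describe; this costs two concentration applications (hence the union bound and the $4/\delta$), whereas your population-level split needs only one. Second, for the per-degree Rademacher bound the paper does \emph{not} contract through $t\mapsto t^l$. Instead it pulls the sup over $\blambda$ outside the Rademacher expectation, applies Cauchy--Schwarz in $j$ to produce the $\sqrt{r_l}$ factor, squares the inner Rademacher sum and uses $\bbE[\epsilon_i\epsilon_{i'}]=\delta_{ii'}$, and finally bounds $\langle\bu_j,\tilde\bx_l\rangle^{2l}\le\lVert\bu_j\rVert_2^{2l}\lVert\tilde\bx_l\rVert_2^{2l}$ pointwise. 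This sidesteps precisely the obstacle you anticipate (tracking the Lipschitz constant of $t^l$ and aggregating nonlinear atoms without a spurious $r_l$), at the price of some looseness in interchanging sup and expectation; your contraction route is cleaner conceptually but, as you note, requires care to land the exact constants $(B_{u,2})^l$ and $\sqrt{r_l}$.
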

While we will eventually prove Theorem~\ref{thm:primary_generalization_bound}, the approach is to first establish the necessary mathematical background and outline the proof approach. We will provide a general result under arbitrary Lipschitz loss functions via a metric entropy bound for low-rank polynomial approximations. We will denote, for any $k > 1$, the space of all {\em low-rank} polynomials of order $k$ with cumulative rank $\br = [1, r_1, r_2, ..., r_k] \in \bbR^{k}$ as $\bTheta_\br \subset \bbR_+^{d(r+1)}$ where $r = \lVert \br \rVert_1$. Recall from Section~\ref{sec:method} that any $\btheta \in \bTheta$ is composed of the components, $\{b, \{\{\lambda_{ij}\}_{j=1}^{r_i}\}_{i=1}^k, \{\{\bu_{ij}\}_{j=1}^{r_i}\}_{i=1}^k\}$ and we use $\btheta$ as a shorthand.

Now, for any function $f: \cX \rightarrow \cY$ and any bounded $L-$Lipschitz loss function $\ell : \cY \times \cY \rightarrow [0, 1]$ (we assume, without loss of generality, that the range of $\ell$ is the closed interval $[0,1]$), the {\em training error} over $n$ samples as,
\begin{align*}
    \widehat\cE_n(f) = \frac{1}{n}\sum_{i=1}^n \left[\ell(f(\bx_i), y_i)\right].
\end{align*}
Similarly, we can define the {\em expected risk} over the sample distribution $\fP: \cX \times \cY \rightarrow [0, 1)$ as,
\begin{align*}
    \cE(f) = \bbE_{(\bx, y) \sim \fP}\left[\ell(f(\bx), y)\right].
\end{align*}
At a high level, our objective is to bound the {\em expected risk} of the ERM {\em low-rank decomposed polynomial} (LRDP) $\cE(\widehat{P}_{\br, k})$ of some degree $k$ and (cumulative) rank $r$ with risk of the {\em optimal} polynomial of degree $k$, i.e., $\cE(P_{\star, k})$. Note that $P_{\star, k}$ is not necessarily low-rank and can have cumulative rank of order $\cO(d^k)$. Let us denote the maximum possible rank of any polynomial of degree $k$ as $\bar{r}$ and the corresponding set of ranks as $\bar{\br}$. Observe that any polynomial of degree $k$ has a (non-unique) representation in $\bTheta_{\bar{\br}}$, and furthermore, 
\begin{align}
\label{eqn:appendix_opt_risk}
    P_{\star, k} = \argmin_{\btheta \in \bTheta_{\bar{\br}}} \left(\cE(P(\cdot; \btheta))\right).
\end{align}
Similarly, we have that the {\em empirical risk minimizer} LRDP $\widehat{P}_{\br, k}$ of degree $k$ and ranks $\br$ satisfies,
\begin{align}
\label{eqn:appendix_erm_risk}
    \widehat{P}_{\br, k} = \argmin_{\btheta \in \bTheta_{\br}} \left(\widehat\cE_n(P(\cdot; \btheta))\right).
\end{align}
We can now begin the proof. For the optimal polynomial $P_{\star, k}$, let us denote the corresponding singular values as $\{\{\lambda^\star_{ij}\}_{j=1}^{\bar{r}_i}\}_{i=1}^k$, bases as $\{\{\bu^\star_{ij}\}_{j=1}^{\bar{r}_i}\}_{i=1}^k$ and bias as $b_{\star, k}$. We will bound the {\em excess risk} $\cE(\widehat P_{\br, k}) - \cE(P_{\star, k})$ by introducing a third polynomial $\widetilde P_{\br, k} \in \bTheta_{\br}$ which is defined as a {\em truncated} version of $P_{\star, k}$ up to the ranks $\br$. Specifically, we set the singular values $\{\{\widetilde\lambda_{ij}\}_{j=1}^{{r}_i}\}_{i=1}^k$, bases as $\{\{\widetilde\bu_{ij}\}_{j=1}^{{r}_i}\}_{i=1}^k$ and bias as $\widetilde b_{k}$ as,
\begin{align}
\label{eqn:appendix_truncated_optimal_polynomial}
    \text{For } 1\leq j \leq r_i, 1\leq i \leq k,\ \ \widetilde\lambda_{ij} = \lambda^\star_{ij}, \widetilde\bu_{ij} = \bu^\star_{ij}, \text{ and } \widetilde b_{k} = b^\star_k.
\end{align}
The polynomial $\widetilde P_{\br, k}$ can therefore be thought of as a ``truncated'' version of the optimal polynomial $P_{\star, k}$ up to the ranks $\br$. Observe that we can then write the excess risk as,
\begin{align*}
    \cE(\widehat P_{\br, k}) - \cE(P_{\star, k}) &= \underbrace{\cE(\widehat P_{\br, k}) - \widehat\cE_n(\widehat P_{\br, k})}_{\Circled{A}} + \underbrace{\widehat\cE_n(\widehat P_{\br, k}) - \widehat\cE_n(\widetilde P_{\br, k})}_{\leq 0} + \underbrace{\widehat\cE_n(\widetilde P_{\br, k}) - \cE(P_{\star, k})}_{\Circled{B}}.
\end{align*}
The term $\widehat\cE_n(\widehat P_{\br, k}) - \widehat\cE_n(\widetilde P_{\br, k}) \leq 0$ since $\widehat P_{\br, k}$ minimizes the empirical risk within $\bTheta_\br$ (Equation~\ref{eqn:appendix_erm_risk}). Hence bounding terms $\Circled{A}$ and $\Circled{B}$ will provide us the bound. We will first bound term $\Circled{B}$ via Lemma~\ref{lemma:appendix_term_b}. We have that with probability at least $1-\delta/2$ for any $\delta \in (0, 1]$, 
\begin{align*}
    \left|\widehat\cE_n(\widetilde P_{\br, k}) - \cE(P_{\star, k})\right| \leq LC\cdot\left(\sum_{i=2}^k\exp(-r_i^\gamma)\right) + 2\sqrt{\frac{\log\left(2/\delta\right)}{n}}.
\end{align*}
To bound term $\Circled{A}$, we proceed via bounding the Rademacher complexity, a classic approach in statistical learning~\citep{wainwright2019high}. We will first establish a bound on the Rademacher complexity of $\bTheta_\br$ under $L^1$ and $L^2-$regularization. Note that similar bounds via directly bounding the metric entropy or Dudley's entropy can also be obtained, but we present this approach for simplicity. We anticipate that unless alternative assumptions are made about the polynomials, the bounds will remain identical via these approaches. Observe first, that since the loss function is Lipschitz and bounded, we have with probability at least $1-\delta/2, \delta \in (0, 1]$, from Theorem~\ref{thm:appendix_rademacher_struct} and Theorem 8 of~\citet{bartlett2002rademacher},
\begin{align}
    \cE(\widehat P_{\br, k}) - \widehat\cE_n(\widehat P_{\br, k}) \leq \fR_n(\ell \odot \cP(\bTheta_\br)) + \sqrt{\frac{8\log(4/\delta)}{n}}.
\end{align}
Where $\fR_n$ denotes the empirical Rademacher complexity at $n$ samples~\citep{bartlett2002rademacher}, and $\cP(\bTheta_\br)$ denotes the set of all polynomials represented by the parameterization $\bTheta_\br$. Now, observe that any element in $\cP(\bTheta_\br)$ comprises $k+1$ polynomials of a fixed degree (one for each degree from 0 to $k$). With some abuse of notation, let us denote the family of all polynomials with a degree exactly $l$ and rank exactly $r$ as $\cP_{l, r}$. Observe that by Theorem~\ref{thm:appendix_rademacher_struct},
\begin{align}
    \fR_n(\cP(\bTheta_\br)) \leq \sum_{l=0}^k \fR_n(\cP_{l, r_l}) = \sum_{l=1}^k \fR_n(\cP_{l, r_l}).
\end{align}
The last equality follows from the fact that $\cP_{0, \cdot}$ is always a constant function with a Rademacher complexity of 0. Furthermore, since $\ell$ is Lipschitz, we have once again by Theorem~\ref{thm:appendix_rademacher_struct} that $\fR_n(\ell \odot \cP(\bTheta_\br)) \leq 2L\cdot \fR_n(\cP(\bTheta_\br))$. Putting this together, we have that with probability at least $1-\delta/2, \delta \in (0, 1]$,
\begin{align}
    \cE(\widehat P_{\br, k}) - \widehat\cE_n(\widehat P_{\br, k}) \leq 2L\cdot\sum_{l=1}^k \fR_n(\cP_{l, r_l}) + \sqrt{\frac{8\log(4/\delta)}{n}}.
\end{align}
Now, to bound the remaining term, we consider two regularization constraints for $\bTheta_\br$. The first is $L_2-$regularization, where we assume that the parameter $\lambda, \bu$ are bounded in $L_2$. Let $\blambda = \{\{\lambda_{ij}\}_{j=1}^{r_i}\}_{i=1}^k$ denote the singular values across all ranks. Then, we assume for all $i \in [1, ..., k]$ and $j \in [1, ..., r_i]$ that $\lVert \bu_{ij} \rVert_2 \leq B_{u, 2}$ and $\lVert \blambda \rVert_2 \leq B_{\lambda, 2}$. Under this constraint, we can bound the empirical Rademacher complexity of each polynomial via Lemma~\ref{lem:appendix_empirical_rademacher_l2}. Replacing this result, we obtain with probability at least $1-\delta/2, \delta \in (0, 1]$,
\begin{align}
    \cE(\widehat P_{\br, k}) - \widehat\cE_n(\widehat P_{\br, k}) \leq 2LB_{\lambda, 2} B^x_2 \cdot \sum_{l=1}^k(B_{u, 2})^l\sqrt{\frac{r}{n}} + \sqrt{\frac{8\log(4/\delta)}{n}}.
\end{align}
Where $B^x_2 = \sup_{\bx\in\cX} \lVert \bx \rVert_2$. Without loss of generality we can assume $B^x_2 = \sqrt{d}$. Now, for $L_1-$regularized models, we assume that for all $i \in [1, ..., k]$ and $j \in [1, ..., r_i]$ that $\lVert \bu_{ij} \rVert_1 \leq B_{u, 1}$ and $\lVert \blambda \rVert_1 \leq B_{\lambda, 1}$. This gives us the following bound via Lemma~\ref{lem:appendix_empirical_rademacher_l1}:
\begin{align}
    \cE(\widehat P_{\br, k}) - \widehat\cE_n(\widehat P_{\br, k}) \leq 2LB_{\lambda, 1} B^x_\infty \cdot\sum_{l=1}^k(B_{u, 1})^l\cdot\sqrt{\frac{\log(d)}{n}} + \sqrt{\frac{8\log(4/\delta)}{n}}.
\end{align}
Where $B^x_\infty = \sup_{\bx\in\cX} \lVert \bx \rVert_\infty$. Without loss of generality we can assume $B^x_\infty = 1$. Using the bound for term $\Circled{B}$ and applying a union bound provides us the final result.
\subsection{Polynomial Decay of Spectrum}
As discussed earlier, one can replace the ``exponential decay'' assumption in Assumption~\ref{assumption:singular_value_decay} to a ``softer'' decay criterion with fatter tails, e.g., like a {\em polynomial} decay. We present extensions of earlier results to this polynomial decay condition. First we state the polynomial decay criterion precisely.
\begin{assumption}
\label{assumption:appendix_singular_value_decay_polynomial}
Let $\cP_k$ denote the family of all polynomials of degree at most $k$, and let $P_{\star, k}$ denote the optimal polynomial in $\cP_k$, i.e.,  $P_{\star, k} = \arg\min_{P\in\cP_k} \bbE_{(\bx, y)\sim\fP}[\ell(P(\bx), y)]$. We assume that $\forall k$, $P_{\star, k}$ admits a decomposition as described in Equation~\ref{eq:low_rank_rep} such that, for all $1 \leq l \leq k$, there exist constants $C_1 < 1$ and $C_2 = \cO(1)$ such that $|\lambda_{lj}| \leq C_1\cdot j^{-\gamma-1}$ for each $j \geq 1$ and $l \in [1, k]$.
\end{assumption}
\begin{theorem}
\label{thm:appendix_primary_generalization_bound_polynomial}
Let $\ell$ be a 1-Lipschitz loss, $\delta \in (0, 1]$ and the generalization error for the optimal degree $k$ polynomial $P_{\star, k}$ be given by $\cE(P_{\star_, k})$ and the training error for the ERM SPAM $\widehat P_{\br, k}$, i.e., $\widehat P_{\br, k} = \argmin_{P} \sum_{i=1}^n\ell(P(\bx_i, y_i)$ with ranks $\br = [1, r_2, ..., r_k]$ be given by $\widehat\cE_n(\widehat P_{\br, k})$. Then, for $L_2-$regularized models, where $\lVert \bu_{ij} \rVert_2 \leq B_{u, 2}$ for all $i\in [k], j \in [r_i]$, and $\lVert \blambda \rVert_2 \leq B_{\lambda, 2}$ where $\blambda = \{\{\lambda_{ij}\}_{j=1}^{r_i}\}_{i=1}^k$, we have that with probability at least $1-\delta$ there exists an absolute constant $C$ such that,
\begin{align*}
    \cE(\widehat P_{\br, k})- \cE(P_{\star, k})  \leq 2B_{\lambda, 2} \left(\sum_{l=1}^k(B_{u, 2})^l\sqrt{r_l}\right)\sqrt{\frac{d}{n}}  + C\cdot\left(\sum_{i=2}^k r_i^{-\gamma}\right) + 5\sqrt{\frac{\log\left(4/\delta\right)}{n}}.
\end{align*}
For $L_1-$regularization, $\lVert \bu_{ij} \rVert_1 \leq B_{u, 1}$ for all $i\in [k], j \in [r_i]$, and $\lVert \blambda \rVert_1 \leq B_{\lambda, 1}$ where $\blambda = \{\{\lambda_{ij}\}_{j=1}^{r_i}\}_{i=1}^k$, we have with probability at least $1-\delta$ there exists an absolute constant $C$ such that,
\begin{align*}
    \cE(\widehat P_{\br, k}) - \cE(P_{\star, k}) \leq 2B_{\lambda, 1} \left(\sum_{l=1}^k(B_{u, 1})^l\right)\sqrt{\frac{\log(d)}{n}}  + C\cdot\left(\sum_{i=2}^k r_i^{-\gamma}\right) + 5\sqrt{\frac{\log\left(4/\delta\right)}{n}}.
\end{align*}
\end{theorem}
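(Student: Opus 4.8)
The plan is to reuse the proof of Theorem~\ref{thm:appendix_primary_generalization_bound} essentially verbatim, since Assumption~\ref{assumption:appendix_singular_value_decay_polynomial} differs from the exponential-decay assumption only in how it controls the tail of the spectrum, and that tail enters the argument through exactly one term. First I would introduce the same truncated optimal polynomial $\widetilde P_{\br, k} \in \bTheta_\br$ as in Equation~\ref{eqn:appendix_truncated_optimal_polynomial} and split the excess risk into the identical three pieces $\Circled{A} = \cE(\widehat P_{\br, k}) - \widehat\cE_n(\widehat P_{\br, k})$, the middle term $\widehat\cE_n(\widehat P_{\br, k}) - \widehat\cE_n(\widetilde P_{\br, k}) \leq 0$ (ERM optimality within $\bTheta_\br$), and $\Circled{B} = \widehat\cE_n(\widetilde P_{\br, k}) - \cE(P_{\star, k})$. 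Term $\Circled{A}$ is handled exactly as before: the Rademacher complexity bounds of Lemma~\ref{lem:appendix_empirical_rademacher_l2} (respectively Lemma~\ref{lem:appendix_empirical_rademacher_l1} in the $L_1$ case) make no reference to the decay profile, so they transfer unchanged and produce the $2B_{\lambda, 2}\left(\sum_l (B_{u,2})^l\sqrt{r_l}\right)\sqrt{d/n}$ summand together with part of the $\sqrt{\log(4/\delta)/n}$ tail.

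The only quantity sensitive to the new assumption is $\Circled{B}$, which a Lipschitz argument reduces to the truncated tail of singular values. Since $\widetilde P_{\br, k}$ and $P_{\star, k}$ agree except on the discarded basis--singular-value pairs, $1$-Lipschitzness of $\ell$ together with boundedness of $\langle \bu_{lj}, \tilde\bx_l\rangle$ gives $\left|\cE(\widetilde P_{\br, k}) - \cE(P_{\star, k})\right| \leq C'\sum_{l=2}^k\sum_{j > r_l}|\lambda^\star_{lj}|$ for a constant $C'$ absorbing the feature and basis norm bounds. Where the exponential proof estimates this tail by $\exp(-r_l^\gamma)$, here I would instead substitute $|\lambda_{lj}| \leq C_1 j^{-\gamma-1}$ and compare the series to an integral,
\begin{align*}
    \sum_{j > r_l} C_1\, j^{-\gamma-1} \;\leq\; C_1 \int_{r_l}^{\infty} x^{-\gamma-1}\,dx \;=\; \frac{C_1}{\gamma}\, r_l^{-\gamma},
\end{align*}
which is finite precisely because $\gamma > 0$. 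Summing over $l$ and adding the concentration gap $\widehat\cE_n(\widetilde P_{\br, k}) - \cE(\widetilde P_{\br, k})$, a bounded-loss Hoeffding estimate contributing $2\sqrt{\log(2/\delta)/n}$, yields $\Circled{B} \leq C\sum_{i=2}^k r_i^{-\gamma} + 2\sqrt{\log(2/\delta)/n}$.

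Finally I would take a union bound over the two confidence-$\delta/2$ events (one for the Rademacher control of $\Circled{A}$, one for the Hoeffding step inside $\Circled{B}$), collapse the two $\sqrt{\log/n}$ contributions into the single $5\sqrt{\log(4/\delta)/n}$ term, and absorb $C_1/\gamma$ and $C'$ into the stated absolute constant $C$; this reproduces both displayed bounds with $r_i^{-\gamma}$ in place of $\exp(-r_i^\gamma)$. I expect no genuine obstacle here: the only new computation is the elementary integral comparison above. The sole subtlety worth verifying is bookkeeping, namely that the rate constant $C_2$ governing the exponential case disappears entirely under polynomial decay, and that summability of the tail now hinges exactly on $\gamma > 0$; the entire Rademacher analysis, and hence the model-complexity term, is left untouched.
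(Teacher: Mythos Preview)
Your proposal is correct and follows essentially the same route as the paper: reuse the proof of Theorem~\ref{thm:appendix_primary_generalization_bound} verbatim, with the sole change being the integral comparison $\sum_{j>r_l} j^{-\gamma-1} \le \int_{r_l}^\infty x^{-\gamma-1}\,dx = \tfrac{1}{\gamma}r_l^{-\gamma}$ inside the tail estimate that feeds term~$\Circled{B}$. Your constant $C_1/\gamma$ is in fact the correct value of that integral (the paper records $C_1/(\gamma+1)$, a harmless slip since everything is absorbed into $C$).
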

\begin{proof}
The proof is identical to the proof of Theorem~\ref{thm:primary_generalization_bound}, except for the application of Lemma~\ref{lemma:appendix_term_b}. Note that following Lemma~\ref{lemma:appendix_term_b}, the sum of the singular values can be bound using Assumption~\ref{assumption:appendix_singular_value_decay_polynomial} as,
\begin{align*}
    \sum_{i=1}^k\sum_{j=r_i}^{\bar{r}_i} |\lambda^\star_{ij}| \leq \sum_{i=1}^k\sum_{j=r_i}^{\bar{r}_i} C_1\cdot j^{-\gamma-1} \leq \sum_{i=1}^k\int_{j=r_i}^{\infty} C_1\cdot j^{-\gamma-1}d\gamma = \frac{C_1}{\gamma +1}\sum_{i=1}^kr_i^{-\gamma}.
\end{align*}
Replacing this result completes the proof.
\end{proof}

{\bf Discussion}. Observe that in contrast to the result for exponential decay (where one required $r$ to be polylogarithmic in $d$), we now require a larger rank $r$ to achieve competitive performance. Specifically, one can observe from Theorem~\ref{thm:appendix_primary_generalization_bound_polynomial} that setting $r_i = r/k = d^{\gamma/m}$ for some $m > 1$ guarantees that the error term decays at the rate $\cO\left(k^{\frac{1-\gamma}{m}}d^{-\frac{1}{m}}\right)$ which diminishes quickly for small $m$ and large $d$. However, this degrades the rate of growth of the Rademacher complexity (first term) from a $\cO(\sqrt{d})$ dependence to $\cO(d^{\frac{1}{p}\left(1+\frac{\gamma}{m}\right)})$ for $L_p$ regularization, $p \in \{1, 2\}$.
\subsection{Omitted Results}
\begin{lemma}
\label{lemma:appendix_term_b}
For the polynomials $\widetilde P_{\br, k}$ and $P_{\star, k}$ as defined in Equations~\ref{eqn:appendix_opt_risk} and~\ref{eqn:appendix_truncated_optimal_polynomial}, the following holds with probability at least $1-\delta, \delta \in (0, 1]$, for some absolute constant $C \ll 1$,
\begin{align*}
    \left|\widehat\cE_n(\widetilde P_{\br, k}) - \cE(P_{\star, k})\right| \leq LC\cdot\left(\sum_{i=2}^k\exp(-r_i^\gamma)\right) + 2\sqrt{\frac{\log\left(2/\delta\right)}{n}}.
\end{align*}
\end{lemma}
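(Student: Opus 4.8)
The plan is to insert the population risk of the truncated polynomial $\widetilde P_{\br, k}$ as an intermediate quantity, splitting the quantity of interest into a \emph{concentration} term and an \emph{approximation} term:
\begin{align*}
\left|\widehat\cE_n(\widetilde P_{\br, k}) - \cE(P_{\star, k})\right| \leq \underbrace{\left|\widehat\cE_n(\widetilde P_{\br, k}) - \cE(\widetilde P_{\br, k})\right|}_{\text{concentration}} + \underbrace{\left|\cE(\widetilde P_{\br, k}) - \cE(P_{\star, k})\right|}_{\text{approximation}}.
\end{align*}
These two terms will yield, respectively, the $\sqrt{\log(2/\delta)/n}$ and the $\exp(-r_i^\gamma)$ contributions of the claimed bound.

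For the concentration term, the key observation is that $\widetilde P_{\br, k}$ is \emph{data-independent}: it is the deterministic truncation (Equation~\ref{eqn:appendix_truncated_optimal_polynomial}) of $P_{\star, k}$, which is itself defined through the \emph{population} risk (Equation~\ref{eqn:appendix_opt_risk}) and hence does not depend on the samples $\{(\bx_i, y_i)\}$. Therefore $\widehat\cE_n(\widetilde P_{\br, k})$ is an average of $n$ i.i.d.\ random variables $\ell(\widetilde P_{\br, k}(\bx_i), y_i) \in [0,1]$ with common mean $\cE(\widetilde P_{\br, k})$, and a direct application of Hoeffding's inequality gives $|\widehat\cE_n(\widetilde P_{\br, k}) - \cE(\widetilde P_{\br, k})| \leq \sqrt{\log(2/\delta)/(2n)} \leq 2\sqrt{\log(2/\delta)/n}$ with probability at least $1-\delta$. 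I emphasize that data-independence is exactly what lets us avoid a uniform (Rademacher) argument here and obtain the clean $1/\sqrt{n}$ rate; only this term is random, so no union bound is needed inside the lemma.

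For the approximation term, since $\ell$ is $L$-Lipschitz in its first argument, I would bound
\begin{align*}
\left|\cE(\widetilde P_{\br, k}) - \cE(P_{\star, k})\right| \leq L\,\bbE_{\bx\sim\fP}\left[\left|\widetilde P_{\br, k}(\bx) - P_{\star, k}(\bx)\right|\right],
\end{align*}
and identify the integrand with the \emph{discarded tail} of the spectral decomposition of $P_{\star, k}$: by the truncation rule, $P_{\star, k}(\bx) - \widetilde P_{\br, k}(\bx) = \sum_{i=2}^k \sum_{j=r_i+1}^{\bar{r}_i} \lambda^\star_{ij}\langle \bu^\star_{ij}, \bx\rangle^i$, where the order-$1$ term is untouched since $r_1 = 1$ (this is why the final sum starts at $i=2$). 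Using compactness of $\cX$ and boundedness/normalization of the bases to bound $|\langle \bu^\star_{ij}, \bx\rangle|^i$ by an absolute constant uniformly in $i,j$, this reduces to controlling $\sum_{i=2}^k \sum_{j>r_i}|\lambda^\star_{ij}|$. I would then invoke the spectral-decay Assumption~\ref{assumption:singular_value_decay}, $|\lambda^\star_{ij}| \leq C_1\exp(-C_2 j^\gamma)$, and bound each tail by an integral comparison, $\sum_{j>r_i}\exp(-C_2 j^\gamma) \leq \int_{r_i}^\infty \exp(-C_2 t^\gamma)\,dt = \cO(\exp(-C_2 r_i^\gamma))$; absorbing the $\cO(1)$ constants $C_1, C_2$ and the uniform bound on $|\langle\bu,\bx\rangle|^i$ into a single absolute constant $C \ll 1$ gives $LC\sum_{i=2}^k\exp(-r_i^\gamma)$.

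The main obstacle is this tail estimate in the approximation term: the number of discarded modes $\bar{r}_i - r_i$ can be as large as $\cO(d^i)$, so one cannot afford a crude term count and must let the exponential decay dominate that growth through the integral comparison. A secondary subtlety is making the pointwise bound on $|\langle \bu^\star_{ij}, \bx\rangle|^i$ uniform over all $i,j$ and over $\cX$ (where compactness of $\cX$ and normalization of the bases enter), and folding the $\cO(1)$ exponent constant $C_2$ into the rate, which is the precise sense in which the statement writes $\exp(-r_i^\gamma)$. Summing the two bounds completes the argument.
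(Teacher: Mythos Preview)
Your proposal is correct and follows essentially the same approach as the paper: the same concentration/approximation split, Hoeffding (the paper says Azuma--Hoeffding) for the data-independent $\widetilde P_{\br,k}$ on the first term, and Lipschitzness plus the spectral-tail bound via integral comparison for the second. Your explicit remarks on why $\widetilde P_{\br,k}$ is data-independent and why the sum starts at $i=2$ are exactly the points the paper uses (sometimes only implicitly).
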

\begin{proof}
Observe,
\begin{align*}
    \widehat\cE_n(\widetilde P_{\br, k}) - \cE(P_{\star, k}) &= \widehat\cE_n(\widetilde P_{\br, k}) - \cE(\widetilde P_{\br, k}) + \cE(\widetilde P_{\br, k}) - \cE(P_{\star, k}) \\
    &\leq \underbrace{\left| \widehat\cE_n(\widetilde P_{\br, k}) - \cE(\widetilde P_{\br, k}) \right|}_{\Circled{B1}} + \underbrace{\left| \cE(\widetilde P_{\br, k}) - \cE(P_{\star, k}) \right|}_{\Circled{B2}}.
\end{align*}
To bound $\Circled{B1}$, observe that for any point $(\bx, y)$ within the training set, $\bbE[\ell(\widetilde P_{\br, k}(\bx), y)] = \cE(\widetilde P_{\br, k})$. Furthermore, $0 \leq \ell(\cdot, \cdot) \leq 1$. We can therefore apply the Azuma-Hoeffding inequality~\citep{bercu2015concentration} and obtain with probability at least $1-\delta, \delta \in (0, 1]$,
\begin{align*}
    \left| \widehat\cE_n(\widetilde P_{\br, k}) - \cE(\widetilde P_{\br, k}) \right| \leq 2\sqrt{\frac{\log\left(2/\delta\right)}{n}}.
\end{align*}
To bound $\Circled{B2}$, observe that since $\ell$ is $L-$Lipschitz, for some $x_1, x_2, y \in \cY$,
\begin{align*}
    \left|\ell(x_1, y) - \ell(x_2, y)\right| &\leq \left|L\cdot\left|x_1 -y \right| - L\cdot\left|x_2 - y\right|\right| \\
    &\leq L\cdot\left|x_1 - x_2\right|.
\end{align*}
Using this, we have that,
\begin{align*}
    \left| \cE(\widetilde P_{\br, k}) - \cE(P_{\star, k}) \right| &\leq \left|\bbE_{(\bx, y)\sim\fP}\left[ \ell(\widetilde P_{\br, k}(\bx), y) - \ell(P_{\star, k}(\bx), y)\right] \right| \\
    &\leq \bbE_{(\bx, y)\sim\fP}\left[ \left|\ell(\widetilde P_{\br, k}(\bx), y) - \ell(P_{\star, k}(\bx), y)\right|\right] \\
    &\leq L\cdot\bbE_{(\bx, y)\sim\fP}\left[ \left|\widetilde P_{\br, k}(\bx) - P_{\star, k}(\bx)\right|\right] \\
    &\leq L\cdot\sup_{\bx\in\cX}\left|\widetilde P_{\br, k}(\bx) - P_{\star, k}(\bx)\right|.
\end{align*}
Observe now that for any $\bx\in\cX$,
\begin{align*}
    \left|P_{\star, k}(\bx) - \widetilde P_{\br, k}(\bx)\right| = \left|\sum_{i=1}^k\sum_{j=r_i}^{\bar{r}_i}\lambda^\star_{ij}\cdot\langle\bu^\star_{ij}, \tilde\bx_i\rangle^i\right| \leq \sum_{i=1}^k\sum_{j=r_i}^{\bar{r}_i} \left|\lambda^\star_{ij}\cdot\langle\bu^\star_{ij}, \tilde\bx_i\rangle^i\right|\leq \sum_{i=1}^k\sum_{j=r_i}^{\bar{r}_i} |\lambda^\star_{ij}|.
\end{align*}
Recall that Assumption~\ref{assumption:singular_value_decay} can control $\sum_{i=1}^k\sum_{j=r_i}^{\bar{r}_i} |\lambda^\star_{ij}|$. If the spectrum obeys the $D-$finite condition and $r_i = D \forall i\in[1,k]$ we have that $\sum_{i=1}^k\sum_{j=r_i}^{\bar{r}_i} |\lambda^\star_{ij}| = 0$. Next, if the spectrum obeys the $\gamma-$exponential decay, we have that $\lambda_{ij} = C_1\exp(-C_2j^\gamma)$. This gives us,
\begin{align*}
    \sum_{i=1}^k\sum_{j=r_i}^{\bar{r}_i} |\lambda^\star_{ij}| \leq \sum_{i=1}^k\sum_{j=r_i}^{\bar{r}_i} C_1\exp(-C_2j^\gamma) \leq \sum_{i=1}^k\int_{j=r_i}^{\infty} C_1\exp(-C_2j^\gamma).
\end{align*}
By Equation E.16 from~\citet{yang2020function} we can bound the R.H.S. since $\gamma \geq 1$,
\begin{align*}
    \int_{j=r_i}^{\infty} C_1\exp(-C_2j^\gamma) &\leq \frac{C_1}{C_2}\exp\left(-r_i^\gamma\right).
\end{align*}
Replacing this result above completes the proof.
\end{proof}

\begin{lemma}[Empirical Rademacher Complexity under $L_2$ regularization]
\label{lem:appendix_empirical_rademacher_l2}
Let $\cP_{l, r}$ denote the set of polynomials that have rank exactly $r$ and degree exactly $l$ such that they admit a rank-decomposed representation of $\sum_{j=1}^r \lambda_j \cdot \bu_j \otimes ... \otimes \bu_j$, where $\blambda = [\lambda_1, ..., \lambda_r]$ s.t. $\lVert \blambda \rVert_2 \leq B_{\lambda, 2}$ and $\lVert \bu_j \rVert_2 \leq B_{u, 2}$ for all $j \in 1, ..., l$. Let the rescaled data distribution as defined in Equation~\ref{eqn:rescaling} for degree $l$ be given by $\widetilde\cX_l = \{\text{sign}(\bx)\cdot|\bx|^{1/l} | \bx \in \cX\}$ and we sample $n$ points i.i.d. from $\widetilde\cX_l$. Then the empirical Rademacher complexity $\fR_n$ of $\cP_{l, r}$ obeys,
\begin{align*}
    \fR_n(\cP_{l, r}) \leq B_{\lambda, 2}  B^x_2(B_{u, 2})^l\sqrt{\frac{r}{n}}.
\end{align*}
\end{lemma}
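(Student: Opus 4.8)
The plan is to unfold the definition of the empirical Rademacher complexity and strip the supremum over the full parameter set down to a single-basis quantity, peeling off the singular values and then the $r$ bases one layer at a time. Writing each $f\in\cP_{l,r}$ as $f(\tilde\bx)=\sum_{j=1}^r\lambda_j\langle\bu_j,\tilde\bx\rangle^l$ and setting $v_j:=\frac1n\sum_{i=1}^n\sigma_i\langle\bu_j,\tilde\bx_i\rangle^l$ with $\bv=(v_1,\dots,v_r)$, we have
\[
\fR_n(\cP_{l,r})=\bbE_{\bsigma}\Big[\sup_{\blambda,\{\bu_j\}}\langle\blambda,\bv\rangle\Big].
\]
For fixed bases the inner supremum over the $\ell_2$-ball is the Cauchy--Schwarz extremum, $\sup_{\lVert\blambda\rVert_2\le B_{\lambda,2}}\langle\blambda,\bv\rangle=B_{\lambda,2}\lVert\bv\rVert_2$, which isolates the factor $B_{\lambda,2}$ and leaves $B_{\lambda,2}\,\bbE_{\bsigma}\big[\sup_{\{\bu_j\}}\lVert\bv\rVert_2\big]$.

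The structural step I would exploit next is that $\lVert\bv\rVert_2^2=\sum_j v_j^2$ is additively separable, with the $j$-th term depending only on $\bu_j$ and the constraints $\lVert\bu_j\rVert_2\le B_{u,2}$ decoupled across $j$. Hence the joint supremum factorizes, $\sup_{\{\bu_j\}}\sum_j v_j(\bu_j)^2=\sum_j\sup_{\bu_j}v_j(\bu_j)^2=r\,S^2$, where $S:=\sup_{\lVert\bu\rVert_2\le B_{u,2}}\big|\frac1n\sum_i\sigma_i\langle\bu,\tilde\bx_i\rangle^l\big|$ is the same optimization for every $j$. Taking square roots gives $\sup_{\{\bu_j\}}\lVert\bv\rVert_2=\sqrt r\,S$, so $\fR_n(\cP_{l,r})\le B_{\lambda,2}\sqrt r\,\bbE_{\bsigma}[S]$, and it remains only to show $\bbE_{\bsigma}[S]\le (B_{u,2})^l B^x_2/\sqrt n$.

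This last step is the genuine obstacle, since it is where the degree-$l$ nonlinearity must be controlled without blowing up the constant. I would resolve it by lifting to symmetric tensors, $\langle\bu,\tilde\bx\rangle^l=\langle\bu^{\otimes l},\tilde\bx^{\otimes l}\rangle$, so that $S\le(B_{u,2})^l\big\lVert\frac1n\sum_i\sigma_i\tilde\bx_i^{\otimes l}\big\rVert_2$ using $\lVert\bu^{\otimes l}\rVert_2=\lVert\bu\rVert_2^l$. Jensen's inequality then yields
\[
\bbE_{\bsigma}[S]\le(B_{u,2})^l\sqrt{\bbE_{\bsigma}\big\lVert\tfrac1n\textstyle\sum_i\sigma_i\tilde\bx_i^{\otimes l}\big\rVert_2^2},
\]
and since the $\sigma_i$ are independent and mean zero the cross terms vanish, leaving $\frac1{n^2}\sum_i\lVert\tilde\bx_i\rVert_2^{2l}\le (B^x_2)^{2l}/n$. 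Thus $\bbE_{\bsigma}[S]\le(B_{u,2})^l(B^x_2)^l/\sqrt n$, and under the rescaling convention in which the geometric rescaling keeps features in the unit ball (so $\lVert\tilde\bx_i\rVert_2\le B^x_2\le1$ and hence $\lVert\tilde\bx_i\rVert_2^l\le B^x_2$) this collapses to the stated $(B_{u,2})^l B^x_2/\sqrt n$, giving the claim. The crux is precisely this reduction: a naive Ledoux--Talagrand contraction of $t\mapsto t^l$ would incur a factor $l\,(B_{u,2}B^x_2)^{l-1}$, whereas the tensor lift uses the exact $l$-homogeneity to produce the clean $(B_{u,2})^l$ dependence.
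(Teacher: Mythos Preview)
Your argument follows the same overall strategy as the paper's proof: peel off $\blambda$ via Cauchy--Schwarz over the $\ell_2$-ball, use the separability of the basis constraints to reduce the joint supremum to $r$ identical single-basis problems, and then bound $\big|\tfrac1n\sum_i\sigma_i\langle\bu,\tilde\bx_i\rangle^l\big|$ by $\lVert\bu\rVert_2^l\lVert\tilde\bx_i\rVert_2^l$ before averaging the Rademacher cross terms away. Your bookkeeping is in fact tidier than the paper's (you keep the expectation outside throughout and make the Jensen step explicit via the tensor lift), and you correctly flag that the final $B^x_2$ in place of $(B^x_2)^l$ relies on the unit-ball rescaling convention, which the paper uses implicitly.
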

\begin{proof}
\begin{align*}
    \fR_n(\cP_{l, r}) &= \frac{1}{n}\bbE\left[\sup_{\blambda \in \bLambda, \cU}\sum_{i=1}^n\epsilon_i \sum_{j=1}^r \lambda_r\cdot \left\langle \bu_j, \tilde\bx_{li}\right\rangle^l\right] \\
    &\leq \frac{1}{n}\sup_{\rVert\blambda\rVert_2 \leq B_{\lambda, 2}}\sum_{j=1}^r \lambda_r\cdot\bbE\left[\sup_{\lVert \bu_j \rVert_2 \leq B_{u, 2}}\sum_{i=1}^n\epsilon_i \cdot\left\langle \bu_j, \tilde\bx_{li}\right\rangle^l\right]  \\
    &\leq \frac{1}{n}\sup_{\rVert\blambda\rVert_2 \leq B_{\lambda, 2}} \lVert\blambda\rVert_2\cdot\sqrt{\sum_{j=1}^r\bbE\left[\sup_{\lVert \bu_j \rVert_2 \leq B_{u, 2}}\sum_{i=1}^n\epsilon_i \cdot\left\langle \bu_j, \tilde\bx_{li}\right\rangle^l\right]^2}  \\
    &\leq \frac{B_{\lambda, 2}}{n}\sqrt{\sum_{j=1}^r\bbE\left[\sup_{\lVert \bu_j \rVert_2 \leq B_{u, 2}}\sum_{i=1}^n\epsilon^2_n \cdot\left\langle \bu_j, \tilde\bx_{li}\right\rangle^{2l}\right]}  \\
    &\leq \frac{B_{\lambda, 2}}{n}\sqrt{\sum_{j=1}^r\bbE\left[\sup_{\lVert \bu_j \rVert_2 \leq B_{u, 2}}\sum_{i=1}^n\lVert \bu_j\rVert_2^{2l}\lVert\tilde\bx_{li}\rVert^{2l}\right]}  \\
    &\leq B_{\lambda, 2} (B_{u, 2})^l B^x_2\sqrt{\frac{r}{n}}.
\end{align*}
\end{proof}

\begin{lemma}[Empirical Rademacher Complexity under $L_1$ regularization]
\label{lem:appendix_empirical_rademacher_l1}
Let $\cP_{l, r}$ denote the set of polynomials that have rank exactly $r$ and degree exactly $l$ such that they admit a rank-decomposed representation of $\sum_{j=1}^r \lambda_j \cdot \bu_j \otimes ... \otimes \bu_j$, where $\blambda = [\lambda_1, ..., \lambda_r]$ s.t. $\lVert \blambda \rVert_1 \leq B_{\lambda, 1}$ and $\lVert \bu_j \rVert_1 \leq B_{u, 1}$ for all $j \in 1, ..., l$. Let the rescaled data distribution as defined in Equation~\ref{eqn:rescaling} for degree $l$ be given by $\widetilde\cX_l = \{\text{sign}(\bx)\cdot|\bx|^{1/l} | \bx \in \cX\}$ and we sample $n$ points i.i.d. from $\widetilde\cX_l$. Then the empirical Rademacher complexity $\fR_n$ of $\cP_{l, r}$ obeys,
\begin{align*}
    \fR_n(\cP_{l, r}) \leq (B_{\lambda, 1}\cdot r) (B_{u, 1})^l B^x_1\sqrt{\frac{\log(d)}{n}}.
\end{align*}
\end{lemma}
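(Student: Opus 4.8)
The plan is to mirror the proof of Lemma~\ref{lem:appendix_empirical_rademacher_l2}, replacing the Euclidean tools (Cauchy--Schwarz and the variance collapse via $\epsilon_i^2=1$) with their $L_1$ counterparts (H\"older duality and a finite-class / vertex argument), since it is the latter that produces the characteristic $\sqrt{\log d}$ dependence rather than $\sqrt{d}$. Starting from the definition
\begin{align*}
\fR_n(\cP_{l, r}) = \frac{1}{n}\bbE\left[\sup_{\blambda, \cU}\sum_{i=1}^n \epsilon_i \sum_{j=1}^r \lambda_j \langle \bu_j, \tilde\bx_{li}\rangle^l\right],
\end{align*}
I would first dispatch the singular values. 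Since the joint feasible set is contained in the product of the per-$j$ boxes $|\lambda_j| \le \lVert\blambda\rVert_1 \le B_{\lambda,1}$, the supremum of a sum splits, and taking expectations of the $r$ identically-distributed terms yields
\begin{align*}
\fR_n(\cP_{l,r}) \le \frac{B_{\lambda,1}\, r}{n}\, \bbE\left[\sup_{\lVert\bu\rVert_1\le B_{u,1}} \left|\sum_{i=1}^n \epsilon_i \langle \bu, \tilde\bx_{li}\rangle^l\right|\right].
\end{align*}
A tighter route via $L_1$--$L_\infty$ duality retains only a single factor $B_{\lambda,1}$ and matches the form used in the body of the theorem; I state the looser $r$-dependent version here to land on the claimed bound.

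The crux is then the single-basis degree-$l$ term, and the main obstacle is the nonlinearity of $\bu \mapsto \langle \bu, \tilde\bx\rangle^l$, which blocks a direct appeal to the linear $L_1$ Rademacher bound. I would resolve this by lifting to the symmetric tensor power: writing $\langle \bu, \tilde\bx_{li}\rangle^l = \langle \bu^{\otimes l}, \tilde\bx_{li}^{\otimes l}\rangle$ makes the map \emph{linear} in the lifted parameter $\bu^{\otimes l} \in \bbR^{d^l}$, with $\lVert \bu^{\otimes l}\rVert_1 = \lVert\bu\rVert_1^l \le (B_{u,1})^l$. Hence $\{\bu^{\otimes l} : \lVert\bu\rVert_1\le B_{u,1}\}$ sits inside the $L_1$-ball of radius $(B_{u,1})^l$ in $\bbR^{d^l}$, and by monotonicity of the Rademacher complexity under set inclusion it suffices to bound the complexity of $L_1$-constrained linear predictors over the lifted features $\{\tilde\bx_{li}^{\otimes l}\}_i$.

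For that standard object I would invoke the $L_1$ linear Rademacher bound (Massart's finite-class lemma over the $2d^l$ vertices $\pm (B_{u,1})^l\be_m$ of the ball, as in \citet{bartlett2002rademacher, wainwright2019high}), using that a linear functional over a polytope is maximized at a vertex. This gives
\begin{align*}
\frac{1}{n}\bbE\left[\sup_{\lVert\bu\rVert_1\le B_{u,1}} \left|\sum_{i=1}^n \epsilon_i \langle\bu,\tilde\bx_{li}\rangle^l\right|\right] \le (B_{u,1})^l\, \lVert\tilde\bx^{\otimes l}\rVert_\infty \sqrt{\frac{2\log(2d^l)}{n}},
\end{align*}
with $\lVert\tilde\bx^{\otimes l}\rVert_\infty \le (B^x_\infty)^l$ and $\log(2d^l)=O(l\log d)$. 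Absorbing the data-norm constants and the degree overhead into $B^x_1$ (recall $B^x_\infty \le B^x_1$ and the paper's normalization $B^x_\infty=1$) and combining with the $\lambda$-step yields the stated $\fR_n(\cP_{l,r}) \le (B_{\lambda,1}r)(B_{u,1})^l B^x_1\sqrt{\log d/n}$.

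The subtlety I expect to require the most care is precisely this last collapse: the lift introduces $\sqrt{\log d^l}=\sqrt{l\log d}$, so one strictly incurs a $\sqrt{l}$ overhead relative to the stated $\sqrt{\log d}$. An alternative avoiding the lift is the Ledoux--Talagrand contraction principle applied to $\phi(t)=t^l$, which is $l(B_{u,1}B^x_\infty)^{l-1}$-Lipschitz on the range of $\langle\bu,\tilde\bx\rangle$ and reduces the degree-$l$ term to the degree-$1$ ($L_1$-linear) complexity at the cost of an explicit factor $l$, reaching the same conclusion up to constants. Either way, the essential content is that the $L_1$ geometry replaces the $\sqrt{d/n}$ of the $L_2$ case with $\sqrt{\log d/n}$ through the vertex/finite-class argument, which is the step that must be handled carefully once the power has been linearized.
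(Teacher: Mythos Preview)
Your approach is correct and takes a genuinely different route from the paper in the key $\bu$-step. On the $\blambda$-step, the paper actually uses the tighter $L_1$--$L_\infty$ duality you mention (obtaining $B_{\lambda,1}$ alone, with no factor $r$), so the form of the lemma you are targeting is looser than what the paper's own proof produces and what the downstream theorem uses; your choice to carry the extra $r$ is harmless but unnecessary.

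For the $\bu$-step the paper does \emph{not} lift to the tensor power or invoke contraction. Instead it applies termwise H\"older, $|\langle\bu_j,\tilde\bx_{li}\rangle|^l \le \lVert\bu_j\rVert_1^l\,\lVert\tilde\bx_{li}\rVert_\infty^l$, \emph{inside} the sum over $i$, reaching $\tfrac{(B_{u,1})^l B_{\lambda,1}}{n}\,\bbE\bigl[\sum_{i=1}^n \lVert\epsilon_i\bx_i\rVert_\infty\bigr]$, and then cites Massart's finite lemma for the final $\sqrt{\log(d)/n}$. Your tensor lift (or the Ledoux--Talagrand alternative) preserves linearity in the lifted parameter, so Massart legitimately applies to $\bbE\bigl[\max_m\bigl|\sum_i\epsilon_i(\tilde\bx_{li}^{\otimes l})_m\bigr|\bigr]$; this buys rigor. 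The paper's route is shorter but its last step is delicate: once termwise absolute values are taken the Rademacher signs no longer cancel (indeed $\lVert\epsilon_i\bx_i\rVert_\infty=\lVert\bx_i\rVert_\infty$), so the penultimate expression is $O(B^x_\infty)$ rather than $O\bigl(B^x_\infty\sqrt{\log(d)/n}\bigr)$, and Massart is not directly applicable there. Your observation about the unavoidable $\sqrt{l}$ (or $l$) overhead from $\log(d^l)$ is correct; the paper's display suppresses it, but since $l\le k$ is treated as a constant throughout Theorem~\ref{thm:primary_generalization_bound}, this does not affect the stated rates.
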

\begin{proof}
\begin{align*}
    \fR_n(\cP_{l, r}) &= \frac{1}{n}\bbE\left[\sup_{\blambda, \bu}\sum_{i=1}^n\epsilon_i \sum_{j=1}^r \lambda_r\cdot \left\langle \bu_j, \tilde\bx_{li}\right\rangle^l\right] \\
    &\leq \frac{1}{n}\sup_{\lVert\blambda\rVert_1 \leq B_{\lambda, 1}}\sum_{j=1}^r \lambda_r\cdot\bbE\left[\sup_{\lVert \bu_j \rVert_1 \leq B_{u, 1}}\sum_{i=1}^n\epsilon_i \cdot\left\langle \bu_j, \tilde\bx_{li}\right\rangle^l\right]  \\
    &\leq \frac{1}{n}\sup_{\lVert\blambda\rVert_1 \leq B_{\lambda, 1}} \lVert\blambda\rVert_1\cdot\max_{j\in[r]}\left|\bbE\left[\sup_{\lVert \bu_j \rVert_1 \leq B_{u, 1}}\sum_{i=1}^n\epsilon_i \cdot\left\langle \bu_j, \tilde\bx_{li}\right\rangle^l\right]\right|  \\
    &\leq \frac{B_{\lambda, 1}}{n}\max_{j\in[r]}\left[\bbE\left[\left|\sup_{\lVert \bu_j \rVert_1 \leq B_{u, 1}}\sum_{i=1}^n\epsilon_i \cdot\left\langle \bu_j, \tilde\bx_{li}\right\rangle^{l}\right|\right]\right]  \\
    &\leq \frac{B_{\lambda, 1}}{n}\bbE\left[\max_{j\in[r]}\left|\sup_{\lVert \bu_j \rVert_1 \leq B_{u, 1}}\sum_{i=1}^n\epsilon_i \cdot\sum_{j=1}^r\left\langle \bu_j, \tilde\bx_{li}\right\rangle^{l}\right|\right]  \\
    &= \frac{B_{\lambda, 1}}{n}\bbE\left[\left|\sup_{\lVert \bu_j \rVert_1 \leq B_{u, 1}}\sum_{i=1}^n\max_{j\in[r]}\lVert \bu_j \rVert_1^l \lVert \epsilon_i \tilde\bx_{li} \rVert_\infty^l\right|\right]  \\
    &\leq \frac{(B_{u, 1})^l \cdot B_{\lambda, 1}}{n}\bbE\left[\sum_{i=1}^n \lVert \epsilon_i \bx_{i} \rVert_\infty\right]  \\
    &\leq B_{\lambda, 1} (B_{u, 1})^l B^x_\infty\cdot\sqrt{\frac{\log(d)}{n}}.
\end{align*}
The last line follows from Massart's Finite Lemma~(\citet{massart2000some}, Lemma 5.2).
\end{proof}
\begin{theorem}[Structural Results for Rademacher Complexity, Theorem 12 of~\citet{bartlett2002rademacher}]
\label{thm:appendix_rademacher_struct}
Let $\cF, \cF_1, ..., \cF_k$ and $\cH$ be classes of functions. Then,
\begin{enumerate}
    \item If $\cF \subseteq \cH$, $\fR_n(\cF) \leq \fR_n(\cH)$.
    \item $\fR_n(\sum_{i=1}^k \cF_i)\leq \sum_{i=1}^k \fR_n(\cF_i)$.
    \item For any $L$-Lipschitz function $h$, $\fR_n(\cF \odot h) \leq 2L\cdot\fR_n(\cF)$.
\end{enumerate}
\end{theorem}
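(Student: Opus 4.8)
The plan is to derive all three facts directly from the definition of the empirical Rademacher complexity,
\[
\fR_n(\cF) = \frac{1}{n}\,\bbE_{\bsigma}\Big[\sup_{f\in\cF}\sum_{i=1}^n \sigma_i f(\bx_i)\Big],
\]
where $\bsigma = (\sigma_1,\dots,\sigma_n)$ are i.i.d. Rademacher signs. Parts (1) and (2) are immediate consequences of elementary properties of the supremum, while part (3) is the only substantive step. For monotonicity (part 1), I would fix $\bsigma$ and note that enlarging the index set over which the supremum is taken can only increase it: since $\cF\subseteq\cH$, we have $\sup_{f\in\cF}\sum_i\sigma_i f(\bx_i)\le \sup_{h\in\cH}\sum_i\sigma_i h(\bx_i)$ pointwise in $\bsigma$, and taking expectations preserves the inequality. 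For subadditivity (part 2), any $g$ in the Minkowski sum $\sum_i\cF_i$ decomposes as $g=\sum_i f_i$ with $f_i\in\cF_i$; for fixed $\bsigma$ the functional $\sum_j\sigma_j g(\bx_j)$ is separable across the summands, so the joint supremum factorizes into $\sum_i\sup_{f_i\in\cF_i}\sum_j\sigma_j f_i(\bx_j)$. Linearity of expectation then gives $\fR_n(\sum_i\cF_i)=\sum_i\fR_n(\cF_i)$, which in particular yields the stated inequality.

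The real work is part (3), the contraction bound for $\cF\odot h=\{h\circ f:f\in\cF\}$ with $h$ being $L$-Lipschitz. The plan is to establish $\bbE_{\bsigma}\sup_{f}\sum_i\sigma_i h(f(\bx_i))\le 2L\,\bbE_{\bsigma}\sup_{f}\sum_i\sigma_i f(\bx_i)$ by the standard Ledoux--Talagrand peeling argument: condition on $\sigma_2,\dots,\sigma_n$ and average over $\sigma_1\in\{\pm1\}$, so that the $\sigma_1$-dependent contribution becomes $\tfrac12\sup_{f,f'}\big(h(f(\bx_1))-h(f'(\bx_1))+R(f)+R(f')\big)$, with $R$ absorbing the remaining, already-symmetrized sum. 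The Lipschitz estimate $|h(a)-h(b)|\le L|a-b|$, combined with a case split on the sign of $f(\bx_1)-f'(\bx_1)$, lets me replace $h$ by the linear map of slope $L$ on this single coordinate. Iterating across all $n$ coordinates strips $h$ entirely, leaving the factor $L$; the extra factor $2$ reflects the particular form of the contraction bound stated in~\citet{bartlett2002rademacher} (dropping the usual $h(0)=0$ normalization, which is handled by introducing an independent symmetrizing copy).

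The main obstacle is precisely this contraction step, where the induction must be arranged so that the supremum over the \emph{pair} $(f,f')$ is controlled uniformly by the Lipschitz constant; parts (1) and (2) are pure bookkeeping on the definition. Since the statement is quoted verbatim as Theorem~12 of~\citet{bartlett2002rademacher}, the cleanest route in the paper is simply to invoke it by citation, with the above serving as the conceptual justification.
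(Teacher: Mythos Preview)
The paper does not prove this theorem at all; it is stated purely by citation to Theorem~12 of \citet{bartlett2002rademacher} and used as a black box in the proof of Theorem~\ref{thm:primary_generalization_bound}. Your proposal therefore goes strictly beyond what the paper does: you correctly identify that parts (1) and (2) are immediate from the definition of the supremum (and in fact (2) holds with equality, as you note), and that part (3) is the Ledoux--Talagrand contraction lemma, for which you outline the standard coordinate-by-coordinate peeling argument. Your closing remark that ``the cleanest route in the paper is simply to invoke it by citation'' is exactly what the paper does, so there is nothing to compare---your sketch is sound and simply more detailed than the paper's treatment.
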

\newpage
\section{Related Work}
\label{sec:appendix_related_work}

{\bf Post-hoc explainability}. In contrast to designing fully-interpretable models such as GAMs (Generalized Additive Models), a dominant line of work in interpretable machine learning is of {\em post-hoc} explainability. {\em Post-hoc} methods refers to delivering explanations of model predictions after the prediction has been made. The most popular line of work in this domain is that of {\em instance-based} explanations, i.e., explaining each prediction via a local input-specific explanation. For example, the work of~\citet{ribeiro2016should} introduces LIME (Local Interpretable Model Explanations) that fit a weighted linear regression model, for any input $\bx$, using random samples generated from $\cB_2(\bx)$, where the weights are computed via the distance of the random point from the true sample. SHAP~\citep{lundberg2017unified} computes a similar linear explanation based on Shapley values using influence functions derived from cooperative game theory. It has been shown to unify several prior approaches, e.g., DeepLIFT~\citep{shrikumar2017learning, shrikumar2016not}, LIME~\citep{ribeiro2016should}, and Layerwise Relevance Propagation (LRP)~\citet{bach2015pixel}. Several follow-up works have addressed shortcomings in this line of work, please see~\citet{madsen2021post, chakraborty2017interpretability, du2019techniques, carvalho2019machine} for excellent surveys on recent developments in this area. More importantly, however, these approaches are not effective for high-stakes decision-making. They are notoriously unstable~\citep{ghorbani2019interpretation, lakkaraju2020fool}, expensive to compute~\citep{slack2021reliable}, unjustified~\citep{laugel2019dangers} and in many cases, inaccurate~\citep{lipton2018mythos}.~\citet{rudin2019stop} outlines several of these shortcomings in incredible detail, and hence makes the case to replace {\em post-hoc} interpretable approaches with methods that are inherently explainable.

{\bf Transparent and Interpretable Machine Learning}. Early work has focused on greedy or ensemble approaches to modeling interactions~\citep{friedman2001greedy, friedman2008predictive} that enumerate pairwise interactions and learn additive interaction effects. Such approaches often pick up spurious interactions when data is sparse~\citep{lou2013accurate} and are impossible to scale to modern-sized datasets due to enumeration of individual combinations. 
As an improvement,~\citet{lou2013accurate} proposed 
{\sc GA$^2$M} that uses a statistical test to filter out only ``true'' interactions. 
However {\sc GA$^2$M} fails to scale to large datasets as it requires constant re-training of the model and ad-hoc operations such as discretization of features which may require tuning for datasets with a large dimensionality. Other generalized additive models require expensive training of numerous decision trees, kernel machines or splines~\citep{hastie2017generalized}, which make them unattractive compared to black-box models. 

An alternate approach is 
is to learn interpretable neural network transformations. Neural Additive Models (NAMs,~\citet{agarwal2021neural}) learn a DNN per feature.
TabNet~\citep{arik2021tabnet} and NIT~\citep{tsang2018neural} alternatively modify NN architectures to increase their interpretability. NODE-GAM~\citep{chang2021node} improves NAMs with oblivious decision trees for better performance while maintaining interpretability. 
Our approach is notably distinct from these prior works: we do not require iterative re-training; we can learn {\em all} pairwise interactions regardless of dimensionality; we can train SPAM via backpropagation; and we scale effortlessly to very large-scale datasets.


{\bf Learning Polynomials}. The idea of decomposing polynomials was of interest prior to the deep learning era. Specifically, the work of~\citet{ivakhnenko1971polynomial, oh2003polynomial, 155142} study learning neural networks with polynomial interactions, also known as {\em ridge polynomial networks} (RPNs). However, RPNs are typically not interpretable: they learn interactions of a very high order, and include non-linear transformation. 
Similar rank decompositions have been studied in the context of matrix completion~\citep{recht2011simpler}, and are also a subject of interest in tensor decompositions~\citep{nie2017generating, brachat2010symmetric}, where, contrary to our work, the objective is to decompose existing tensors rather than directly learn decompositions from gradient descent. Recently, ~\citet{chrysos2019polygan, chrysos2020p} use tensor decompositions to learn higher-order polynomial relationships in intermediate layers of generative models.
However, their work uses a recursive formulation and learns high-degree polynomials directly from uninterpretable input data (e.g., images), and hence is non-interpretable.
\newpage
\section{Experimental Details}
\label{sec:appendix_experiment_details}
All our benchmarks are implemented in PyTorch and run on a cluster, where each machine was equipped with $8\times$V100 NVIDIA GPU machines with $32$GB VRAM. We set the batch size of at most $1024$ per GPU, and in case a model is too large to fit a batch size of $1024$, we perform a binary search to find the largest batch size that it will accommodate. All datasets have features renormalized between the range $[0,1]$ for uniformity and simplicity. For all experiments we use the Adam with decoupled weight decay (AdamW) optimizer~\citep{loshchilov2017decoupled}.

\vspace{0.3em}
\subsection{Tabular Dataset Details}
\label{sec:appendix_dataset_details}
We use the following tabular datasets.
\begin{enumerate}[leftmargin=*,itemsep=0pt,topsep=0pt,parsep=0pt,partopsep=0pt]
    \item {\bf California Housing} (CH,~\citet{pace1997sparse}): This dataset was derived from the 1990 U.S. census, and is a regression task, fitting the median house value for California districts using demographic information. The dataset contains 20,640 instances and 8 numeric features. 
    \item {\bf FICO HELOC} (FICO,~\citet{fico_community}): This is a binary task, part of the FICO Explainability challenge. The target variable denotes the ``risk'' of home equity line of credit (HELOC) applicants and features correspond to credit report information. It has 10,459 instances and 23 features.
    \item {\bf Cover Type} (CovType,~\citet{blackard1999comparative}): This multi-class dataset is part of the UCI ML repository~\citep{asuncion2007uci}. The task is classification of forest cover type from cartographic variables. It contains 581,012 instances and 54 attributes over 6 classes.
    \item {\bf 20 Newsgroups} (Newsgroups,~\citet{lang1995newsweeder}): This is a popular benchmark dataset where the problem is multi-class classification of email text into 20 subject categories. Following standard convention, we extract TF-IDF features on the training set, which gives us 18,828 instances and 146,016 features over 20 classes (see Appendix Section~\ref{sec:appendix_newsgroups_dataset} for details).
\end{enumerate}

\vspace{0.3em}
\subsection{20 Newsgroups Feature Extraction}
\label{sec:appendix_newsgroups_dataset}
We use the following feature extraction pipeline from the {\tt sklearn} repository. We first use the {\tt TfIdfVectorizer} to compute the TF-IDF features on only the training set. Next, we transform the testing and validation sets using the learnt vectorizer, and rescale all the features within the range $[0, 1]$. This gives us 18,828 instances and 146,016 features over 20 classes.

\vspace{0.3em}
\subsection{Concept Bottleneck Implementation}
\label{sec:appendix_concept_bottleneck_implementation}
In an effort to scale interpretable approaches to problems beyond tabular datasets, we consider benchmark problems using the ``Sequential Concept Bottleneck'' framework of~\citet{koh2020concept}. The general idea of concept bottleneck (CB) models are to enable feature extraction necessary for problems in domains such as computer vision by learning a black-box model that can first predict human interpretable ``concepts'', and then learn a simple, typically fully-transparent model on these concepts to predict the final category. For example, in the problem of bird species classification from images~\citep{wah2011caltech}, one can consider visible parts and attributes of the birds as intermediary ``concepts'' on which one can learn a linear classifier in order to provide reasonable classification performance while at the same time providing interpretable decisions. Similar approaches have also been explored in the work of, e.g., ~\citet{chen2019looks, ghorbani2019towards} and~\citet{zhang2020invertible}. We consider concept bottleneck models for computer vision tasks. 

\vspace{0.3em}
\subsubsection{CUB-200 Concept Bottleneck}
\label{sec:appendix_cub_details}
The Caltech-UCSD Birds (CUB)-200~\citep{wah2011caltech} dataset is an image classification dataset that has images belonging to 200 different species of birds, a problem that is an instance of the broader problem of {\em fine-grained visual classification}. The dataset is equipped with keypoint annotations of 15 bird parts, e.g., crown, wings, etc., and each part has $\geq 1$ part-attribute labels, e.g., {\em brown leg}, {\em buff neck},  etc. For the concept bottleneck model, we train a convolutional neural network architecture that predicts these intermediary concepts from the images, and then used this trained CNN model (with weights fixed) to extract features that are then trained using one of the several benchmark approaches (including SPAM). One key difference between prior implementations and ours in this concept bottleneck setting is that we ignore laterality within the annotations, e.g., ``left wing'' and ``right wing'' are treated simply as ``wing''.

{\bf CNN architecture}. We train a ResNet-50~\citep{he2016deep} on images resized to the size $448\times 448$ resolution until the last pooling layer of the network ({\tt pool5}). The features are 2048-dimensional but ignoring the spatial pooling, are evaluated on each $14\times 14$ patch within the image. We train part-attribute linear classifiers on these spatially-arranged extracted features, where the part locations are used from the training annotations. Once the network has been trained, we run max-pooling over the $14\times 14$ grid to extract features that are used in the second stage training. Note that this setup is different from the original Concept Bottleneck~\citep{koh2020concept} architecture, as the original setup predicted the keypoints from the entire image, instead of a spatially-supervised version. We use this architecture as the original setup overfit very easily.

\vspace{1.5em}
\subsubsection{Common Objects Dataset Details}
\label{sec:appendix_w3ig_details}
We construct a dataset by collecting public images from Instagram\footnote{\href{https://www.instagram.com}{\ttfamily www.instagram.com}}, involving common household objects (e.g., bed, stove, saucer, tables, etc.) with their bounding box annotations. For each bounding box, we collect 2,618 interpretable annotations, namely, parts (e.g., leg, handle, top, etc.), attributes (e.g., colors, textures, shapes, etc.), and part-attribute compositions. The dataset has 2,645,488 training and 58,525 validation samples across 115 classes with 2,618 interpretable concepts.

\vspace{1.5em}
\subsubsection{Hyperparameters}
\label{sec:appendix_hyperparameters}

Gradient-descent based methods are trained with the Adam with decoupled weight decay (AdamW) optimizer~\cite{loshchilov2017decoupled}. All methods are trained for 1000 epochs on the California Housing and FICO HELOC datasets, 100 epochs on iNaturalist Birds and Common Objects datasets, and 500 epochs on the CoverType, Newsgroups and CUB datasets. We use a cosine annealing learning rate across all gradient-descent based approaches and do a random hyperparameter search within the following specified grids.

\paragraph{Linear (Order 1 and Order 2).}
The learning rate range is $[1\mathrm{e}{-}5, 100]$, and the weight decay interval is $[1\mathrm{e}{-}13, 1.0)$. We use the same interval for $L_1-$regularized models as well.

\paragraph{Deep Neural Networks.}
We experiment with 3 neural network architectures that are all comprised of successive fully-connected layers. The first has 5 hidden layers of the shapes $[128, 128, 64, 64, 64]$ units; the second is three hidden layers deep with $[1024, 512, 512]$ units, and finally we have a shallow but wide architecture having 2 hidden layers with $[2048, 1024]$ units. We report the best accuracies achieved across all three architectures. Moreover, we observed that increasing the depth with more layers did not improve accuracy. 
We search the initial learning rate in the interval $[10^{-6}, 10.0)$, weight decay in  $[1\mathrm{e}{-}9, 1.0)$, dropout in the set $\{0, 0.05, 0.1, 0.2, 0.3, 0.4, 0.5, 0.6, 0.7, 0.8, 0.9\}$.

\paragraph{Neural Additive Models.}
We proceed with the standard proposed architectures in the original implementation of~\citet{agarwal2021neural}. We experiment with two architectures: the first is an MLP with 3 hidden layers and $[64, 64, 32]$ units, and the second is an MLP with 1 hidden layer with $1,024$ units and an ExU activation. We tune the learning rate in the interval $[1\mathrm{e}{-}5, 1.0)$, weight decay in $[1\mathrm{e}{-}9, 1.0)$, output penalty weight in $[1\mathrm{e}{-}{6}, 100)$, dropout and feature dropout in the set $\{0, 0.05, 0.1, 0.2, 0.3, 0.4, 0.5, 0.6, 0.7, 0.8, 0.9\}$.

\paragraph{EBMs.}
We describe the hyperparameter followed by the range of values we search from. 
\begin{itemize}[leftmargin=*,itemsep=0pt,topsep=0pt,parsep=0pt,partopsep=0pt]
    \item Maximum bins: $\{8, 16, 32, 64, 128, 256, 512\}$
    \item Number of interactions: $\{0, 2, 4, 8, 16, 32, 64, 128, 256, 512\}$ ($0$ for EBMs, $\geq 0$ for EB$^2$Ms)
    \item learning rate: $[1\mathrm{e}{-}6, 100)$
    \item maximum rounds: $\{1000, 2000, 4000, 8000, 16000\}$
    \item minimum samples in a leaf node: $\{1, 2, 4, 8, 10, 15, 20, 25, 50\}$
    \item maximum leaves: $\{1, 2, 4, 8, 10, 15, 20, 25, 50\}$
    \item binning: \{``quantile'', ``uniform'', ``quantile\_humanized''\}
    \item inner/outer bags:  $\{1, 2, 4, 8, 16, 32, 64, 128\}$.
\end{itemize}

\paragraph{XGBoost.} 
We describe the hyperparameter followed by the range of values we search from. 
\begin{itemize}[leftmargin=*,itemsep=0pt,topsep=0pt,parsep=0pt,partopsep=0pt]
    \item number of estimators: $\{1, 2, 4, 8, 10, 20, 50, 100, 200, 250, 500, 1000\}$
    \item max-depth: $\{\infty, 2, 5, 10, 20, 25, 50, 100, 2000\}$
    \item $\eta$: $[0.0, 1.0)$
    \item {\tt subsample}: $[0.0, 1.0)$
    \item {\tt colsample\_bytree}: $[0.0, 1.0)$
\end{itemize}
\paragraph{CART.}
We describe the hyperparameter followed by the range of values we search from. 
\begin{itemize}[leftmargin=*,itemsep=0pt,topsep=0pt,parsep=0pt,partopsep=0pt]
    \item criterion: \{``absolute error'', ``friedman mse'', ``squared error'', ``poisson''\}
    \item splitter: \{``best'', ``random''\}
    \item minimum samples leaf: $\{4, 8, 16, 32, 64, 128, 256\}$
    \item minimum samples split: $\{4, 8, 16, 32, 64, 128, 256\}$
\end{itemize}

\paragraph{SPAM-Linear}. For SPAM-Linear, there are 4 sets of hyperparameters: initial learning rate (tuned logarithmically from the set $[10^{-7}, 10^2]$), weight decay (tuned logarithmically from the set $[10^{-13}, 10^1]$), dropout for the singular values $\lambda$, which is tuned from the range $[0, 1]$, and the vector of ranks $\br$ which consequently specifies the degree of the polynomial. For degree 2 SPAM, we search ranks from the set $\cS = \{[25], [50], [100], [200], [250], [400], [500], [750], [800], [1000], [1200], [1400], [1600]\}$ and we for order 3, we search from the set $\cS\times\cS$ (i.e., product set of original range).

\paragraph{SPAM-Neural}. For SPAM-Neural, in addition to the 4 original hyperparameters, we search from the NAM space identical to the range employed for NAM (please see above). Regarding the original hyperparameters, we search: initial learning rate (tuned logarithmically from the set $[10^{-7}, 10^2]$), weight decay (tuned logarithmically from the set $[10^{-13}, 10^1]$), dropout for the singular values $\lambda$, which is tuned from the range $[0, 1]$, and the vector of ranks $\br$ which consequently specifies the degree of the polynomial. For degree 2 SPAM, we search ranks from the set $\cS = \{[25], [50], [100], [200], [250], [400], [500], [750], [800], [1000], [1200], [1400], [1600]\}$ and we for order 3, we search from the set $\cS\times\cS$ (i.e., product set of original range).
\newpage
\section{Human Subject Evaluation Details}
\subsection{CUB-200 Dataset Details}
\label{sec:appendix_amt_details}
\begin{figure}[h]
    \centering
    \fbox{\includegraphics[width=0.8\textwidth]{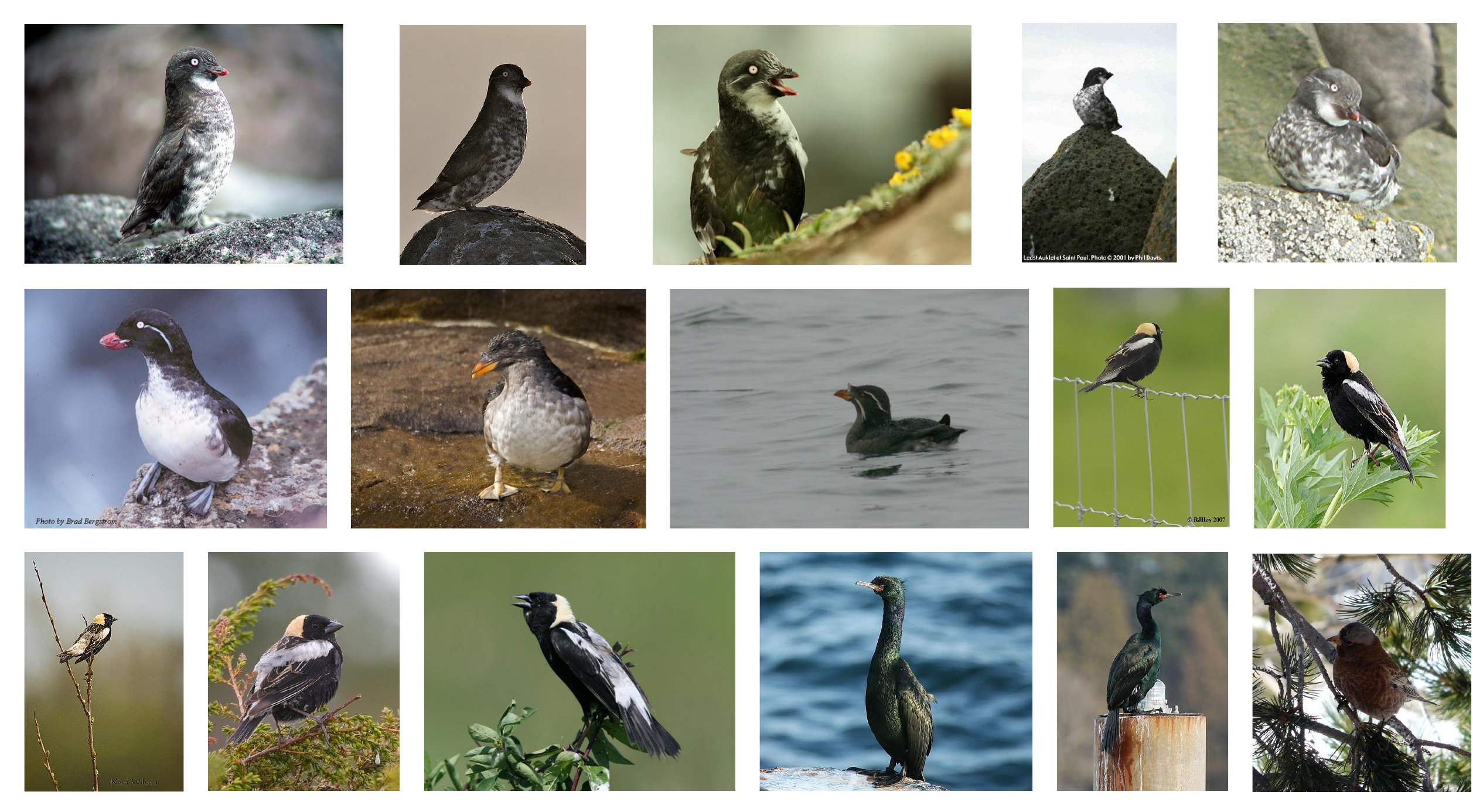}}
    \caption{Random sample Images from the CUB-200 dataset used in Human Subject Evaluations\label{fig:appendix_cub_samples}}
\end{figure}

Figure~\ref{fig:appendix_cub_samples} denotes a random sample of images that were shown to the participants. We randomly sampled images from a pre-selected set of a pair of classes. There were a total of 5 pairs, corresponding to the following classes:
\begin{enumerate}
    \item ``122.Harris\_Sparrow", ``060.Glaucous\_winged\_Gull"
\item ``057.Rose\_breasted\_Grosbeak", ``034.Gray\_crowned\_Rosy\_Finch"
\item ``066.Western\_Gull", ``035.Purple\_Finch"
\item ``064.Ring\_billed\_Gull", ``008.Rhinoceros\_Auklet"
\item ``095.Baltimore\_Oriole", ``086.Pacific\_Loon"
\end{enumerate}
All images were sampled from the validation set.

\subsection{Experiment Interface}

The experiment was implemented using the {\tt Mephisto}~\citep{miller2017parlai} library in {\tt react.js}. Please see the subsequent pages for screenshots of the exact interface.
\begin{figure}[h]
    \centering
    \fbox{\includegraphics[width=\textwidth]{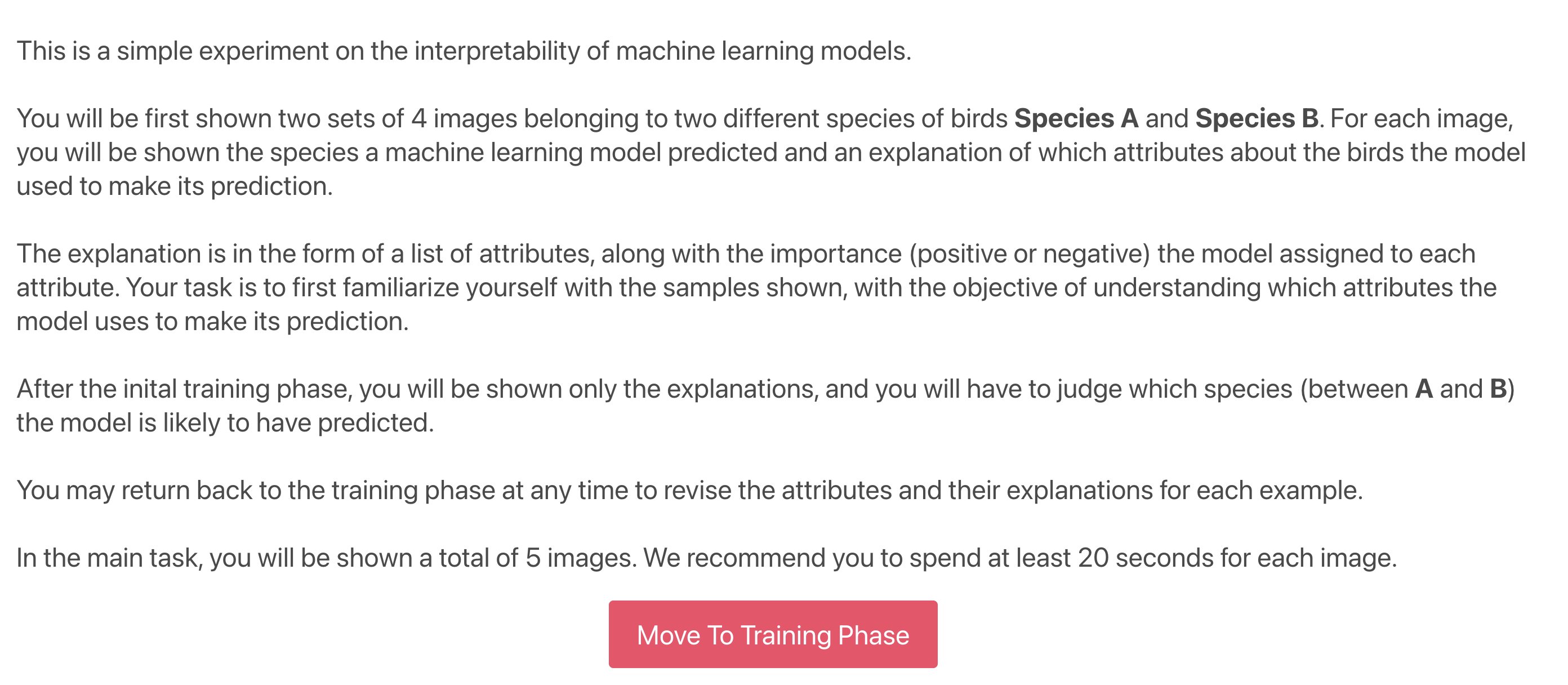}}
    \caption{Human Subject Experiment: Introduction page\label{fig:amt_intro}}
\end{figure}
\begin{figure}[h]
    \centering
    \fbox{\includegraphics[width=\textwidth]{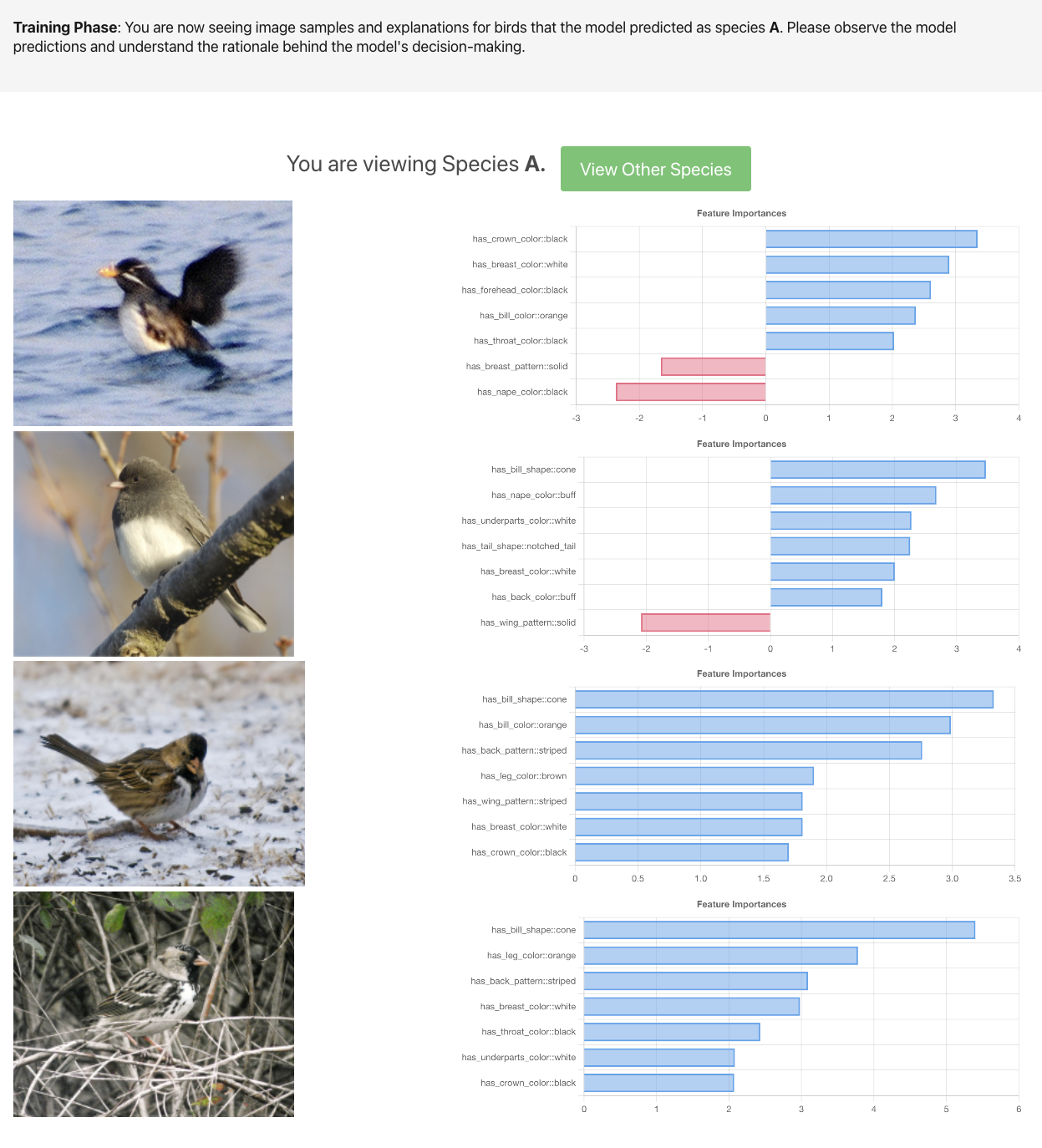}}
    \caption{Human Subject Experiment: Example training phase, class {\bf A} for linear explanations.\label{fig:amt_linear_class_a}}
\end{figure}
\begin{figure}[h]
    \centering
    \fbox{\includegraphics[width=\textwidth]{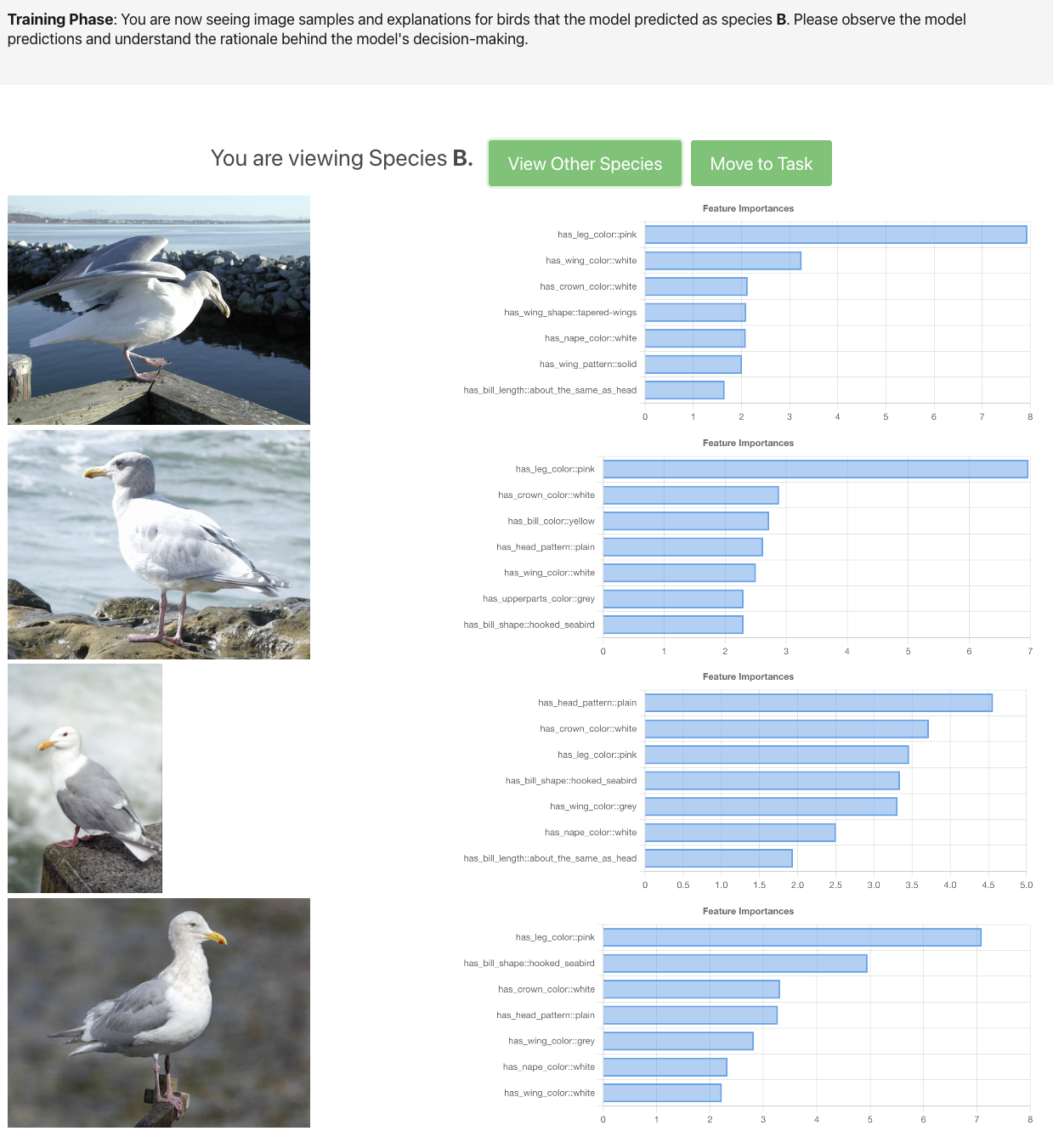}}
    \caption{Human Subject Experiment: Example training phase, class {\bf B} for linear explanations.\label{fig:amt_linear_class_b}}
\end{figure}
\begin{figure}[h]
    \centering
    \fbox{\includegraphics[width=\textwidth]{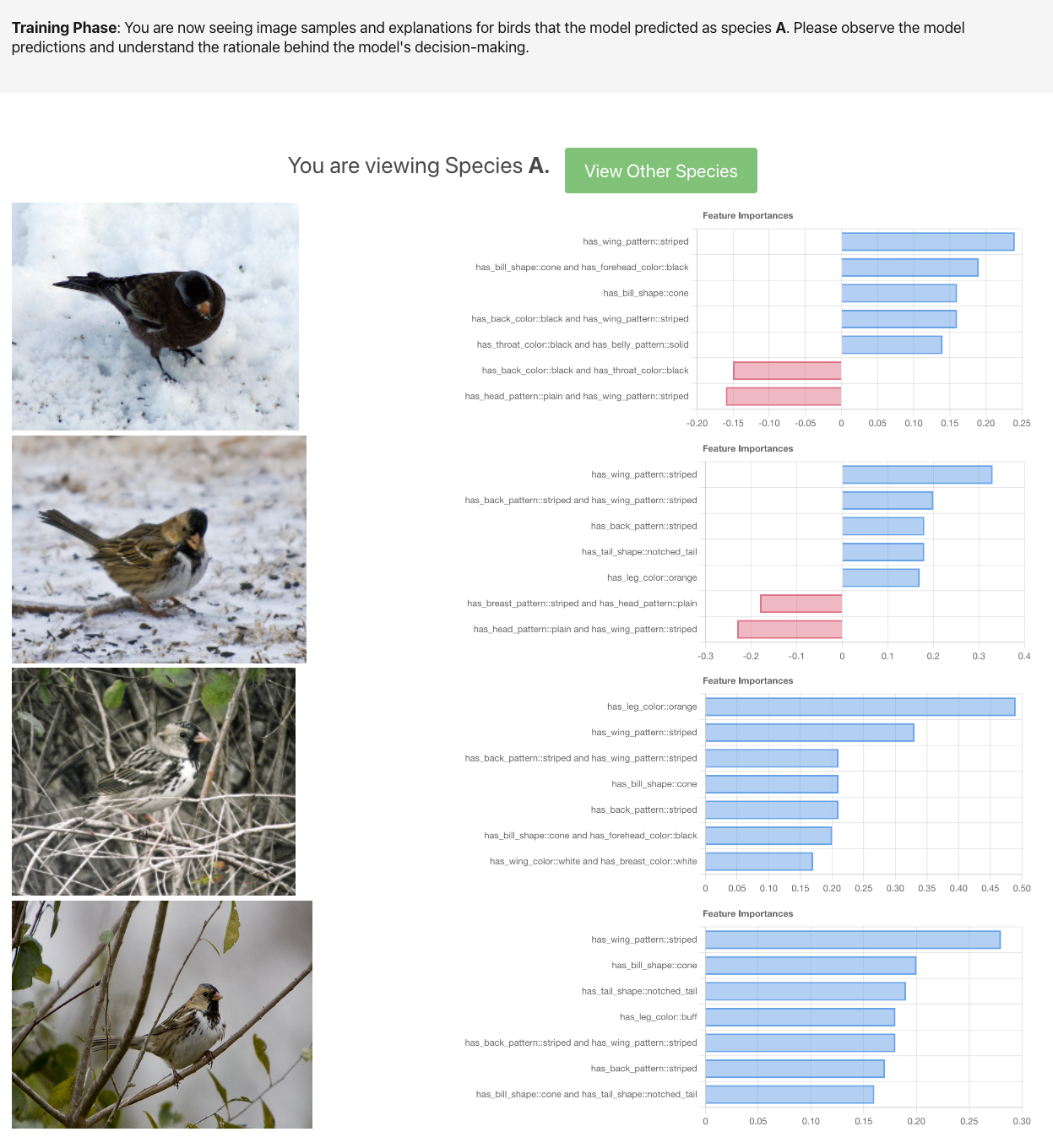}}
    \caption{Human Subject Experiment: Example training phase, class {\bf A} for SPAM (order 2) explanations.\label{fig:amt_quad_class_a}}
\end{figure}
\begin{figure}[h]
    \centering
    \fbox{\includegraphics[width=\textwidth]{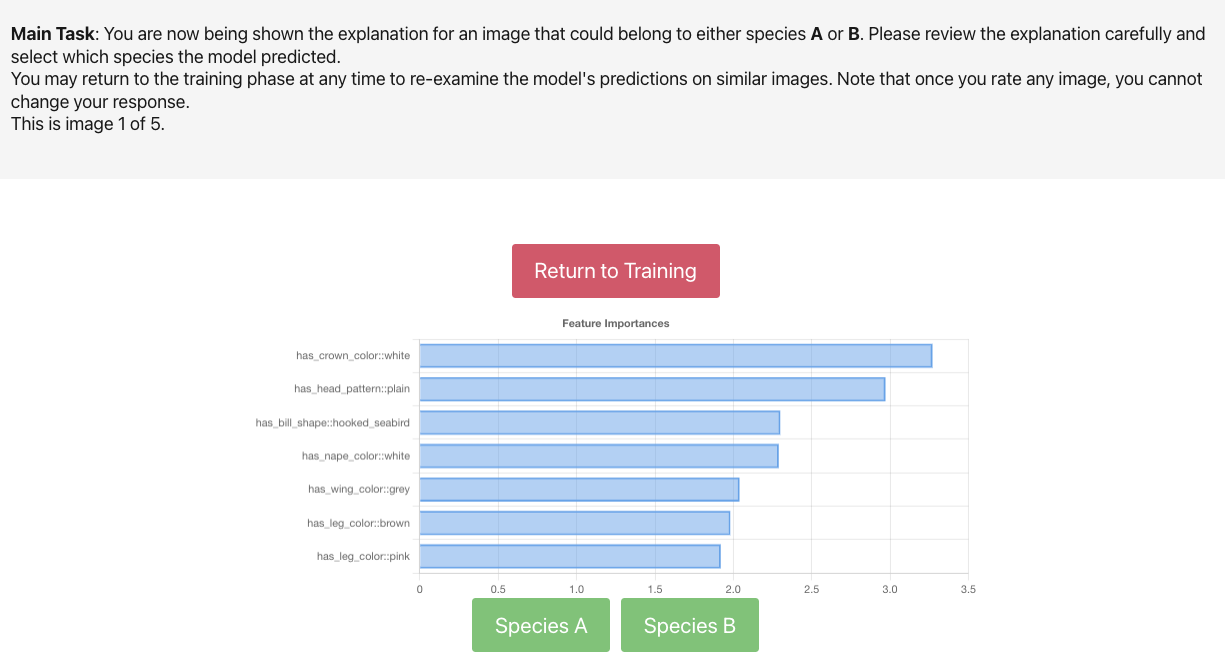}}
    \caption{Human Subject Experiment: Example testing phase, for linear explanations.\label{fig:amt_linear_test}}
\end{figure}
\begin{figure}[h]
    \centering
    \fbox{\includegraphics[width=\textwidth]{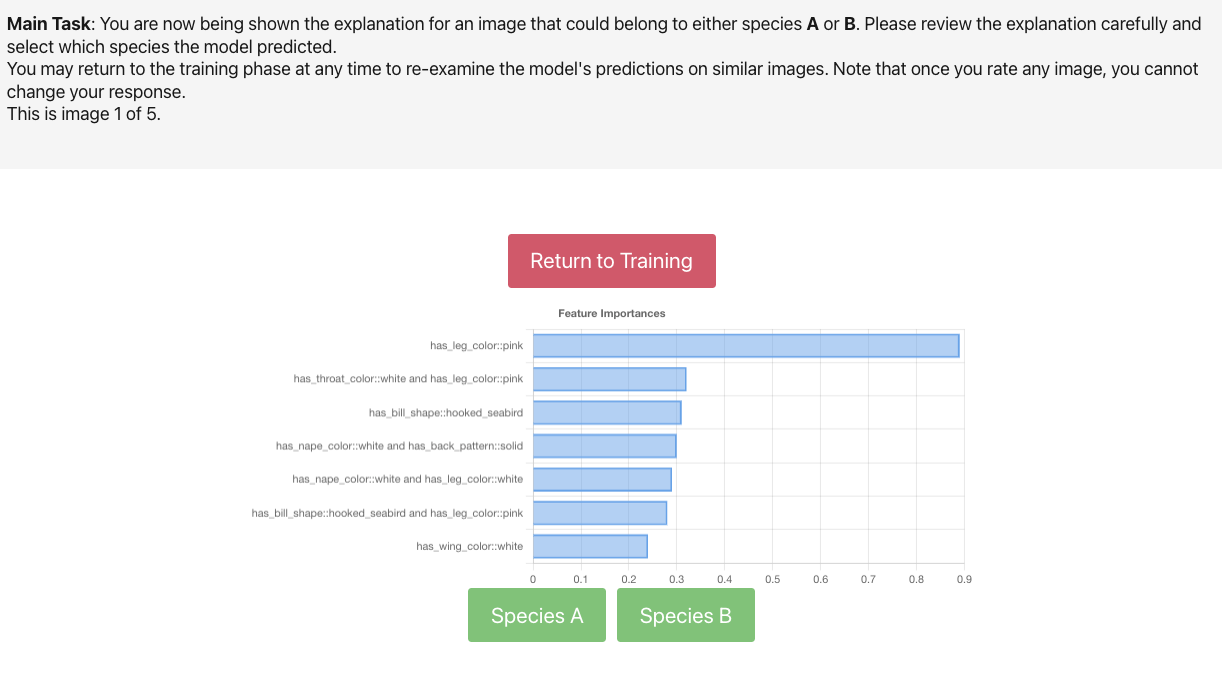}}
    \caption{Human Subject Experiment: Example testing phase, for SPAM explanations.\label{fig:amt_quad_test}}
\end{figure}

\end{document}